\theoremstyle{plain}
\theoremstyle{definition}
\theoremstyle{remark}
\icmltitlerunning{Debiased Offline Representation Learning for Fast Online Adaptation in Non-stationary Dynamics}
\begin{document}

\twocolumn[
\icmltitle{Debiased Offline Representation Learning for Fast Online Adaptation in Non-stationary Dynamics}

% It is OKAY to include author information, even for blind
% submissions: the style file will automatically remove it for you
% unless you've provided the [accepted] option to the icml2024
% package.

% List of affiliations: The first argument should be a (short)
% identifier you will use later to specify author affiliations
% Academic affiliations should list Department, University, City, Region, Country
% Industry affiliations should list Company, City, Region, Country

% You can specify symbols, otherwise they are numbered in order.
% Ideally, you should not use this facility. Affiliations will be numbered
% in order of appearance and this is the preferred way.

\icmlsetsymbol{equal}{*}
\icmlsetsymbol{correspond}{$\dagger$}

% \affiliation{
% \begin{center}
%     \textbf{Xinyu Zhang}$^{1, 2*}$,
%     \textbf{Wenjie Qiu}$^{1, 2*}$,
%     \textbf{Yi-chen Li}$^{1, 2, 3*}$,
%     \textbf{Lei Yuan}$^{1, 2, 3}$,
%     \textbf{Chenxin Jia}$^{1, 2, 3}$,  \\
%     \textbf{Zongzhang Zhang}$^{1, 2\dagger}$,
%     \textbf{Yang Yu}$^{1, 2, 3}$     \\
%     $^1$National Key Laboratory for Novel Software Technology, Nanjing University, China\\
%     $^2$School of Artificial Intelligence, Nanjing University, China \\ 
%     $^3$Polixir Technologies, Nanjing 210038, China\\
%     {\tt\small \{zhangxinyu, qiuwj, liyc, yuanl, jiacx\}@lamda.nju.edu.cn},
%     {\tt\small \{zzzhang, yuy\}@nju.edu.cn} \thanks{$^*$Equal Contribution $^\dagger$ Corresponding Author}
% \end{center}
% } 

\begin{icmlauthorlist}
\icmlauthor{Xinyu Zhang}{equal,lab,nju}
\icmlauthor{Wenjie Qiu}{equal,lab,nju}
\icmlauthor{Yi-Chen Li}{equal,lab,nju,company}
\icmlauthor{Lei Yuan}{lab,nju,company}
\icmlauthor{Chengxing Jia}{lab,nju,company} \\
\icmlauthor{Zongzhang Zhang}{lab,nju}
\icmlauthor{Yang Yu}{lab,nju,company}
%\icmlauthor{}{sch}
% \icmlauthor{Firstname8 Lastname8}{sch}
% \icmlauthor{Firstname8 Lastname8}{yyy,comp}
\end{icmlauthorlist}

\icmlaffiliation{lab}{National Key Laboratory for Novel Software Technology, Nanjing University, China}
\icmlaffiliation{nju}{School of Artificial Intelligence, Nanjing University, China}
\icmlaffiliation{company}{Polixir Technologies}

% \icmlaffiliation{comp}{Company Name, Location, Country}
% \icmlaffiliation{sch}{School of ZZZ, Institute of WWW, Location, Country}

\icmlcorrespondingauthor{Zongzhang Zhang}{ zzzhang@nju.edu.cn}
% \icmlcorrespondingauthor{Firstname2 Lastname2}{first2.last2@www.uk}

% You may provide any keywords that you
% find helpful for describing your paper; these are used to populate
% the "keywords" metadata in the PDF but will not be shown in the document
\icmlkeywords{Offline Meta Reinforcement Learning, Representation Learning, Non-stationary Environments, Information Bottleneck }

\vskip 0.3in
]

% this must go after the closing bracket ] following \twocolumn[ ...

% This command actually creates the footnote in the first column
% listing the affiliations and the copyright notice.
% The command takes one argument, which is text to display at the start of the footnote.
% The \icmlEqualContribution command is standard text for equal contribution.
% Remove it (just {}) if you do not need this facility.

% \printAffiliationsAndNotice{}  % leave blank if no need to mention equal contribution
\printAffiliationsAndNotice{\icmlEqualContribution} % otherwise use the standard text.

\begin{abstract}
Developing policies that can adapt to non-stationary environments is essential for real-world reinforcement learning applications. Nevertheless, learning such adaptable policies in offline settings, with only a limited set of pre-collected trajectories, presents significant challenges. A key difficulty arises because the limited offline data makes it hard for the context encoder to differentiate between changes in the environment dynamics and shifts in the behavior policy, often leading to context misassociations. To address this issue, we introduce a novel approach called \textbf{D}ebiased \textbf{O}ffline \textbf{R}epresentation learning for fast online \textbf{A}daptation (\textbf{DORA}). DORA incorporates an information bottleneck principle that maximizes mutual information between the dynamics encoding and the environmental data, while minimizing mutual information between the dynamics encoding and the actions of the behavior policy. We present a practical implementation of DORA, leveraging tractable bounds of the information bottleneck principle. Our experimental evaluation across six benchmark MuJoCo tasks with variable parameters demonstrates that DORA not only achieves a more precise dynamics encoding but also significantly outperforms existing baselines in terms of performance.
% Offline Reinforcement Learning (RL) attempts to learn an optimal policy from pre-collected datasets. Previous offline RL studies mostly assume deploying the learned policy in a stationary environment. Nevertheless, real-world scenarios naturally involve perturbations, necessitating adaptable policies for \emph{non-stationary} environments. In this paper, we consider learning a policy, which can rapidly adapt to dynamics changes, from offline datasets generated in different environments. To this end, we propose \textbf{D}ebiased \textbf{O}ffline \textbf{R}epresentation learning for fast online \textbf{A}daptation (\textbf{DORA}). DORA employs a context encoder using recent state-action pairs to infer current dynamics. Due to the finiteness of the offline dataset, however, the representations from the encoder may exhibit a biased correlation with the unknown data-collecting behavior policy. This will incur erroneous identification of the dynamic when using the learned policy to collect context during online evaluation. To ensure the accuracy of the encoder, DORA follows the information bottleneck principle to 1) maximize the mutual information between the representation and the dynamic and 2) minimize the mutual information between the representation and behavior policy. For tractable optimization, we respectively derive the lower bound and upper bound of these two objectives. Experiment results on $6$ MuJoCo tasks with $3$ changeable parameters show that DORA significantly outperforms existing baselines.
\end{abstract}

\section{Introduction}

In Reinforcement Learning (RL), the agent attempts to find an optimal policy that maximizes the cumulative reward obtained from the environment ~\citep{rlbook}. However, online RL training typically requires millions of interaction steps \cite{mbAtari}, posing a risk to safety and cost constraints in real-world scenarios \cite{SafeRL_survey}. Different from online RL, offline RL \cite{BCQ, BRAC, CQL, prdc} aims to learn an optimal policy, exclusively from datasets pre-collected by certain behavior policies, without further interactions with the environment. Showing great promise in turning datasets into powerful decision-making machines, offline RL has attracted wide attention recently~\citep{rl_tutorial}.

Previous offline RL methods commonly assume that the learned policy will be deployed to environments with stationary dynamics~\citep{CQL}, while unavoidable perturbations in real world will lead to \emph{non-stationary} dynamics~\cite{RL_in_nonstationary_environments}. For example, the friction coefficient of the ground changes frequently when a robotic vacuum walks on the floor covered by different surfaces. Policies trained for stationary dynamics will be unable to deal with non-stationary ones since the optimal behaviors depend on the certain dynamic. Nevertheless, training an adaptable policy from offline datasets for online adaptation in non-stationary dynamics is overlooked in current RL studies. In this paper, we focus on the setting where the environment dynamics, such as gravity or damping of the controlled robot, change unpredictably within an episode.

Offline Meta Reinforcement Learning (OMRL), which trains a generalizable meta policy from a multi-task dataset generated in different dynamics, offers a potential solution to handle non-stationary dynamics. Among existing OMRL methods, the gradient-based approaches~\cite{MerPO} may not be ideal solutions, as extra gradient updates are likely to cause drastic performance degradation due to the instability of policy gradient methods ~\citep{SAC}. In comparison, context-based OMRL methods \cite{CORRO, FOCAL} extract task-discriminative information from trajectories by learning a context encoder. However, these methods still face severe challenges when encountering non-stationary dynamics. Firstly, the representations from the learned encoder may exhibit a biased correlation with the data-collecting behavior policy~\citep{CORRO}. As a consequence, it will fail to correctly identify the environment dynamics when the learned policy, instead of the behavior policy, is used to collect context during online adaptation. Secondly, existing OMRL methods generally require collecting short trajectories before evaluation during meta-testing phrase~\cite{FOCAL}, which is not allowed in non-stationary dynamics because the changes of dynamics are unknown to the agent.

To tackle the aforementioned issues, we propose \textbf{DORA} (abbreviation of \textbf{D}ebiased \textbf{O}ffline \textbf{R}epresentation learning for fast online \textbf{A}daptation). DORA employs a context encoder that uses the most recent state-action pairs to infer the current dynamics. In order to learn a debiased task representation, we respectively derive a lower bound to maximize the mutual information between the dynamics encoding and environmental data, and an upper bound to minimize mutual information between the dynamics encoding and the actions of the behavior policy, following the Information Bottleneck (IB) principle. Specifically, the lower bound is derived to urge the encoder to capture the task-relevant information using InfoNCE~\citep{InfoNCE}. And the upper bound is for debiasing representations from behavior policy, formalized as the Kullback-Leibler (KL) divergence between the representations with and without behavior policy information of each timestep. The contextual policy is then trained by an offline RL algorithm, such as CQL~\citep{CQL}. Experiment results on six MuJoCo tasks with three different changing parameters demonstrate that DORA remarkably outperforms existing OMRL baselines. Additionally, we illustrate that the learned encoder can swiftly identify and adapt to frequent changes in environment dynamics. We summarize our main contributions as below:
\begin{itemize}
\item We propose an offline representation learning method for non-stationary dynamics, which enables fast adaptation to tasks with frequent dynamics changes. 
\item We derive a novel objective for offline meta learning, which trains the encoder to reduce the interference of the behavior policy to correctly identify dynamics.
\item Experimentally, our method achieves better performance in unseen dynamics on-the-fly without pre-collecting trajectories, compared with baselines.
\end{itemize}
\section{Preliminaries}
\paragraph{Reinforcement Learning}

We consider the infinite-horizon Markov Decision Process (MDP) ~\citep{rlbook}, which can be formulated as a tuple $M = \langle\mathcal{S}, \mathcal{A}, P, r, d_0, \gamma\rangle$. Here, $\mathcal{S}$ and $\mathcal{A}$ represent the state space and action space, respectively. Let $\Delta_X$ be the set of probability measures over any space $X$. We use $P: \mathcal{S}\times\mathcal{A}\to \Delta_\mathcal{S}$ to denote the environment transition function, or equivalently, dynamics\footnote{In this paper, we use ``transition function'' and ``dynamics'' interchangeably.}. $r: \mathcal{S}\times \mathcal{A}\to \mathbb{R}$ is the reward function. $d_0$ denotes the distribution of initial states. $\gamma \in \left[0,1\right)$ is the discount factor. The discounted cumulative reward starting from time step $t$ is defined as $G_t=\sum_{k=0}^\infty \gamma^k r\left(s_{t+k}, a_{t+k}\right)$, where $G_0$ is known as return. The policy $\pi: \mathcal{S} \to \Delta_\mathcal{A}$ specifies a distribution over the action space $\mathcal{A}$, given any state $s \in \mathcal{S}$. RL aims to find an optimal policy $\pi^*$ which maximizes the expected return $J_M(\pi)$, i.e., $\pi^* \in \arg\max_{\pi} J_M(\pi) := \mathbb{E}[G_0]$. When executing the policy $\pi(a|s)$ starting from $s_t$, the value function can be formulated as $V^\pi(s)=\mathbb{E}_\pi\left[G_0 | s_0=s\right]$. Similarly, The action value function is $Q^\pi(s, a)=$ $\mathbb{E}_\pi\left[G_0 \mid s_0=s, a_0=a\right]$.

\paragraph{Context-based OMRL}
OMRL combines offline RL~\citep{CQL} and meta RL~\citep{PEARL}, aiming to learn a meta policy from offline datasets collected from multiple tasks. The goal of OMRL is to learn a meta policy $\pi_{\theta}$ with $\theta$ denoting the learnable parameters, which can generalize to unseen tasks not appearing in the training dataset. The $N$ tasks for meta training $\mathcal{M} = \{M_i\}_{i=1}^N$ are sampled from the same task distribution $P_{\rm train}$. We assume that all tasks share the same state space and action space, with the only difference lying in dynamics. For each task $M_i$, we have an offline dataset $D_i$ of trajectories collected by an unknown behavior policy $\pi^b_i$. For simplicity, we may omit the subscript $i$. In this paper, we use $\tau_{t-H:t}= \{a_{t-H}, s_{t-H+1} \ldots, a_{t-1}, s_{t}\}$ to denote a short offline trajectory of length $H$ for dynamics identification. During OMRL training, the meta policy $\pi_{\theta}$ is trained on all the offline datasets. In the testing phase, the agent encounters unseen tasks over $P_{\rm test}$, and the meta policy needs to generalize to these tasks with limited interactions. As mentioned before, the objective of OMRL is to maximize the expected return in any task sampled from $P_{\rm test}$, i.e., 
\begin{equation}
\underset{\theta}{\max } \  J(\pi_{\theta}):= \mathbb{E}_{M \sim P_{\rm test}}\left[J_M\left(\pi_{\theta}\right)\right].
\end{equation}
Context-based OMRL utilizes an encoder $p_\phi$, where $\phi$ is the learnable parameters, to output a $m$-dimensional task representation $z \in \mathbb{R}^m$. In experiments shown in \cref{sec:exp}, we set $m=2$. The task representation could be regarded as auxiliary information to help the offline policy $\pi_\theta$ to identify the current testing task. For clarity, we use $\pi_\theta(a|s, z)$, $V^{\pi}(s, z)$, $Q^{\pi}(s, a, z)$ thereafter to respectively denote the meta policy, meta value function, and meta action value function which takes the task representation $z$ as an additional input. Compared with the online context-based encoder, the offline-trained encoder is more susceptible to behavior policies due to the limited state and action space coverage of the datasets, which is detrimental in context-based OMRL.

\begin{figure*}[!t]
  \centering
    \includegraphics[width=0.965\linewidth, trim = 6.5cm 5.6cm 5cm 5.2cm, clip]{./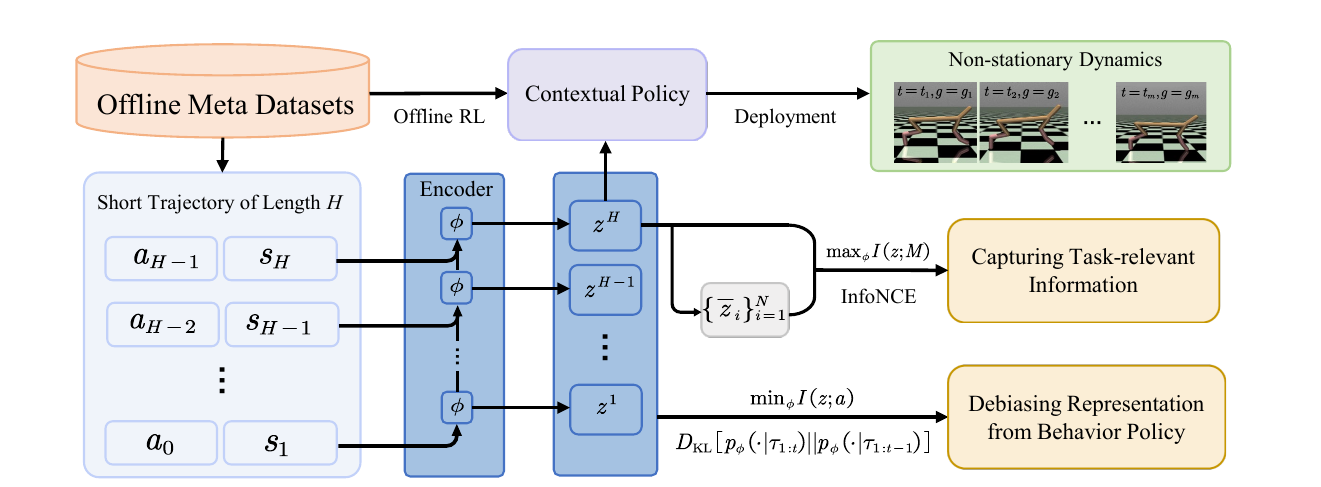}
  \caption{The DORA framework. The encoder utilizes recent state-action pairs to maintain a set of representations $ \{z^{1}, \cdots, z^H \} $ and $z^H$ updates the moving average task encodings $\{\bar{z}_i\}_{i=1}^N$. All these representations are then used to optimize the encoder. The contextual policy is trained through offline RL on the datasets, where each transition is labeled with its representation by the learned encoder. }
  \label{Framework}
\end{figure*}

\paragraph{Information Bottleneck}
In information theory, the IB concept involves extracting essential information by constraining the information flow, thereby finding a delicate equilibrium between preserving target-related information and achieving efficient compression~\cite{IB}. This problem can be framed as the minimization of mutual information $I$ between the source random variable $X$ and its representation $\hat{X}$ for efficient compression, alongside the maximization of mutual information between $\hat{X}$ and the target $Y$ to retain target-related information \cite{DLandIB}. The pursuit of an optimal $\hat{X}$ is cast as the following maximization problem:
\begin{equation} \label{IB_loss}
\text{max}_{\hat{X}} I(\hat{X};Y)  - \beta  I(X;\hat{X}).
\end{equation}
Here, $I(\hat{X};Y)=\int\int p(\hat{x}, y) \log\left(\frac{p(\hat{x}, y)}{p(\hat{x})p(y)}\right) \,d\hat{x} \,dy$, where $p(\hat{x},y)$ is the joint probability density function, and $p(\hat{x})$ and $p(y)$ are marginal probability density functions. The hyper-parameter $\beta$ represents the tradeoff between compression and fitting. IB enables the context encoder to learn abstract and high-level representations, which helps to improve the generalization ability of the meta policy on unseen tasks \cite{CVAE, betaTCVAE}.

\section{Method}
\label{Method}

In this section, we present the DORA framework, which learns a dynamics-sensitive trajectory encoder that mitigates biases from the behavior policy and effectively adapts to non-stationary dynamics. In \cref{The Information Bottleneck Objective}, we adhere to the information bottleneck principle to formulate the objective for offline meta-encoder learning. Subsequently, in \cref{maxMI} and \cref{minMI}, we convert IB into two tractable losses. \cref{algorithm} elaborates on the proposed DORA framework (see \cref{Framework}) including its implementation details.

\subsection{From IB to Debiased Representations} \label{The Information Bottleneck Objective}

Due to the fact that the offline dataset, generated by fixed behavior policies, exhibits limited coverage of the state and action space, the encoder tends to inaccurately treat the behavior policy as a prominent feature for task identification. Such a predicament can result in the decline of the performance of the contextual policy, given its dependency on the encoder's responsiveness to shifts in environmental dynamics. Thus it is crucial for the encoder to discern tasks based on the dynamics rather than the behavior policy.

We follow the IB principle to tackle this issue. Specifically, our approach involves maximizing the mutual information between representations and tasks to encapsulate dynamics-relevant information, while simultaneously minimizing the mutual information between representations and actions of behavior policies to alleviate biases stemming from the behavior policy. Given a training task $M \sim P_{\rm train}$ and the offline trajectory $\tau$ collected on $M$ with trajectory length $H$, this idea can be formulated as:
% \footnote{We typically use uppercase letters to represent random variables and use lowercase letters to denote specific values of random variables.}:
\begin{equation} \label{objective}
\text{max}_{\phi} I(z; M)-\beta I(z; a),
\end{equation}
where $z \sim p_\phi(\cdot | \tau)$, $a \sim \pi^b(\cdot|s)$ and $s\in \mathcal{S}$.

\subsection{Contrastive Dynamics Representation Learning} \label{maxMI}

The first term in \cref{objective}, $I(z; M)$, is used for the learned representation to effectively encapsulate dynamics-relevant information. Since directly optimizing $I(z; M)$ is intractable, we derive a lower bound of it as shown in \cref{proposition1}, which formulates the problem of maximizing $I(z; M)$ as a contrastive learning objective. By training the encoder to distinguish between positive and negative samples, the encoder acquires the capability to generate representations that correctly embed the dynamics information.

\begin{restatable}[]{theorem}{lowerb}\label{proposition1}
Denote a set of $N$ tasks as $\mathcal{M}$, in which each task $M_i$ is sampled from the same training task distribution $P_{\rm train}$. Let random variables $M \in \mathcal{M}$, $\tau$ be a trajectory collected in $M$, $z \sim p_\phi(\cdot|\tau)$, $p(z)$ is the prior distribution of $z$, then we have
\begin{align*} \label{theorem_loss1}
I(z ; M) \geq \mathbb{E}_{M, \tau, z} \left[\log \frac{\frac{p_\phi\left(z \mid \tau \right)}{p\left(z\right)}}{\sum_{M_i\in \mathcal{M}} \frac{p_\phi\left(z \mid \tau^i\right)}{p\left(z\right)}}\right] + \log N,
\end{align*}
where $\tau^i$ is a trajectory collected in task $M_i$, $z_i \sim p_\phi(\cdot|\tau^i)$, and $i \in \{1,2,\cdots, N\}$.
\end{restatable}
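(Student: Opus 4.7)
The statement is an instance of the InfoNCE lower bound on mutual information \citep{InfoNCE}, instantiated with the density-ratio critic $p_\phi(z\mid\tau)/p(z)$. My plan is to derive it via the standard Bayesian-classification interpretation applied to a contrastive batch of trajectories.

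First I would enlarge the probability space by drawing $\mathcal{M}=\{M_i\}_{i=1}^N$ iid from $P_{\mathrm{train}}$, independent trajectories $\tau^i\sim p(\cdot\mid M_i)$, a uniformly random ``positive'' index $i^\star\in\{1,\dots,N\}$, and $z\sim p_\phi(\cdot\mid\tau^{i^\star})$; write $M=M_{i^\star}$ and $\tau=\tau^{i^\star}$, so that $(M,\tau,z)$ reproduces the theorem's generative process. By Bayes' rule applied to the problem of recovering $i^\star$ from $(z,\tau^{1:N})$ --- uniform prior on $i^\star$, likelihood $p_\phi(z\mid\tau^{i^\star})$ --- the true posterior is exactly the self-normalized softmax
\begin{equation*}
p(i^\star = k \mid z,\tau^{1:N}) = \frac{p_\phi(z\mid\tau^k)/p(z)}{\sum_j p_\phi(z\mid\tau^j)/p(z)},
\end{equation*}
which is the expression inside the log on the RHS of the claim.

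Second, I would identify the RHS with a conditional mutual information. Taking the expected log of the posterior above yields $-H(i^\star\mid z,\tau^{1:N})$; since $i^\star$ is drawn independently of the iid batch one has $H(i^\star\mid\tau^{1:N}) = \log N$, so the RHS equals $I(i^\star;z\mid\tau^{1:N})$. It then suffices to show $I(i^\star;z\mid\tau^{1:N})\le I(z;M)$. Using the fact that $z$ depends on $(\tau^{1:N},i^\star)$ only through $\tau^{i^\star}=\tau$, the chain rule gives $I(z;\tau) = I(z;\tau^{1:N}) + I(i^\star;z\mid\tau^{1:N})$; combining with the chain-rule identity $I(z;\tau) = I(z;M) + I(z;\tau\mid M)$ along the Markov chain $M\to\tau\to z$, the desired inequality reduces to $I(z;\tau\mid M) \le I(z;\tau^{1:N})$, which I would establish by exploiting exchangeability and independence of the iid batch of negatives.

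The main obstacle is this last bridge from the combinatorial quantity $I(i^\star;z\mid\tau^{1:N})$ to the semantic quantity $I(z;M)$: a naive data-processing argument along $M\to\tau\to z$ would give $I(z;M)\le I(z;\tau)$, the \emph{wrong} direction for our purposes. Resolving it requires the symmetry and independence of the iid batch, so that the information lost when replacing $M$ by a single sampled trajectory is compensated by the information the batch $\tau^{1:N}$ carries about $z$. The earlier ingredients (contrastive augmentation, Bayesian posterior identification, and the entropy calculation) are routine once the setup is in place.
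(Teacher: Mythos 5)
Your identification of the right-hand side with $I(i^\star;z\mid\tau^{1:N})$ is correct, and your route is genuinely different from the paper's: the paper follows the CORRO/InfoNCE derivation, rewriting $I(z;M)=\mathbb{E}_{M,z}\bigl[\log\mathbb{E}_\tau[p_\phi(z\mid\tau)/p(z)]\bigr]$, applying Jensen twice, and then---crucially---replacing the expectation $\mathbb{E}_{M_i}[p_\phi(z\mid\tau^i)/p(z)]=1$ over negatives by the empirical sum via an explicit ``$\approx$'' step. The problem lies exactly in the bridge you flag as the main obstacle: $I(i^\star;z\mid\tau^{1:N})\le I(z;M)$, equivalently $I(z;\tau\mid M)\le I(z;\tau^{1:N})$, cannot be rescued by exchangeability or independence of the batch, because it is false in general. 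Take $P_{\rm train}$ degenerate, so all $N$ tasks are identical: then $I(z;M)=0$, yet $I(i^\star;z\mid\tau^{1:N})>0$ for any encoder whose output genuinely depends on its input, since within-task variability of trajectories lets $z$ partially reveal which $\tau^i$ generated it. Concretely, your own chain rule gives $I(z;\tau)=I(z;\tau^{1:N})+I(i^\star;z\mid\tau^{1:N})$, so in the degenerate case $I(z;\tau\mid M)=I(z;\tau)>I(z;\tau^{1:N})$---the reverse of what you need.

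This dead end is diagnostic rather than a failure of your technique: the inequality as stated is not a rigorous theorem. What your computation does prove exactly is $I(z;\tau)\ge I(i^\star;z\mid\tau^{1:N})=\mathbb{E}[\cdots]+\log N$, i.e., the InfoNCE quantity lower-bounds the information about the \emph{trajectory}, not about the \emph{task}; the gap is precisely $I(z;\tau\mid M)$, the nuisance information about which trajectory within a task was observed (ironically, the very quantity the debias loss is designed to suppress). The paper's proof is not rigorous at this point either: its ``$\approx$'' step (a single-sample Monte Carlo replacement inside a logarithm) is where the uncontrolled slack enters, with no claimed inequality direction. If you want a correct statement along your lines, prove the bound for $I(z;\tau)$ and record that it transfers to $I(z;M)$ only up to the additive error $I(z;\tau\mid M)$.
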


Due to space limitations, we defer the proof of \cref{proposition1} to \cref{Proof of Theorems}. In practice, to approximate $p_{\phi}(z|\tau)/p(z)$, we choose the radius basis function $S(z_{i}, z_{j}) = \mathrm{exp}(-||z_{i} - z_{j}||^2/{\alpha})$, which measures the similarity between the two representations $z_i$ and $z_j$~\citep{mnih2012fast}. More details about this approximation can be found in \cref{Explanations of the Approximation in Distortion Loss}. Here, $\alpha$ is a hyper-parameter. We then obtain the distortion loss $\mathcal{L}_{\text{Dist}}$ as follows:
\begin{equation} \label{pratical_L1}
\mathcal{L}_{\text{Dist}}(\phi) = -\sum_{\substack{M_i \in \mathcal{M}, \\ \tau^i \in D_i }}\left[\mathrm{log} \left(\frac{S\left(z_{i}, \bar{z}_{i}\right)}{\sum_{j=1}^N S\left(z_i,\bar{z}_j\right)}\right)\right],
\end{equation}
where $z_{i}$ is encoded by the encoder $p_{\phi}$ from trajectory $\tau^i$. $\bar{z}_i$ denotes the average task representation for task $i$, updated by 
$
\bar{z}_i \leftarrow \lambda z_{i} + (1 - \lambda) \bar{z}_{i},
$
where $\lambda\in (0,1)$ is a hyper-parameter. Intuitively, by grouping trajectories from the same task while distinguishing those from different tasks, the distortion loss will help the encoder to extract dynamics-relevant information.

\subsection{ Debiasing Representation from Behavior Policy} \label{minMI}
Ideally, the learned representation should be minimally influenced by the behavior policy. Hence, as shown in the second term in \cref{objective}, we minimize the mutual
information $I(z;a)$ between representations and actions of behavior policy. However, it is intractable to precisely calculate $I(z;a)$. We thus consider deriving an upper bound of it. Firstly, we have the following theorem.

\begin{restatable}[]{theorem}{upperb}\label{proposition2}
Given a training task $M \in \mathcal{M}$, a trajectory $\tau$ collected in $M$ and $z\sim p_\phi(\cdot|\tau)$, we have
\begin{equation}\label{equ:ub} 
I(z ; a) \leqslant \mathbb{E}_a \left[ \  D_{\mathrm{KL}}\left[p\left(\cdot \mid a \right) \| t(\cdot)\right] \right],
\end{equation}
where $t(z)$ is an arbitrary distribution over $\mathbb{R}^m$ and $D_{\rm KL}$ is the Kullback-Leibler divergence~\citep{kld}.
\end{restatable}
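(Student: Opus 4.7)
The plan is to apply the standard variational upper bound for mutual information, introducing the arbitrary distribution $t(z)$ as a tractable proxy for the true marginal $p(z)$. Concretely, I would begin by expanding $I(z;a)$ from its definition as $\mathbb{E}_{p(z,a)}\!\left[\log\frac{p(z\mid a)}{p(z)}\right]$, which can equivalently be written as $\mathbb{E}_a\!\left[D_{\mathrm{KL}}\!\left(p(\cdot\mid a)\,\|\,p(\cdot)\right)\right]$.

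Next, I would perform the classic multiply-and-divide trick with $t(z)$: rewrite
\[
\log\frac{p(z\mid a)}{p(z)} = \log\frac{p(z\mid a)}{t(z)} - \log\frac{p(z)}{t(z)},
\]
and take expectations under the joint $p(z,a)$. The first term becomes $\mathbb{E}_a[D_{\mathrm{KL}}(p(\cdot\mid a)\,\|\,t(\cdot))]$, which is exactly the right-hand side of the stated inequality. The second term, after marginalizing $a$ out, reduces to $D_{\mathrm{KL}}(p(\cdot)\,\|\,t(\cdot))$, which is non-negative by Gibbs' inequality. Dropping this non-negative quantity yields the desired upper bound.

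The only subtlety worth flagging is that the statement writes $p(\cdot\mid a)$ without a subscript, so I would be explicit in the write-up that this denotes the conditional distribution of $z$ given $a$ induced by the joint $p_\phi(z\mid\tau)\pi^b(a\mid s)$ together with the data-generating distribution over $(s,\tau)$ in task $M$, and similarly clarify the marginal $p(z)$ over which the KL is implicitly defined. The main obstacle is therefore essentially bookkeeping rather than mathematical: making sure the joint distribution over $(z,a)$ is unambiguously specified so that the introduction of the arbitrary variational marginal $t(z)$ is well-posed and the decomposition above is rigorously justified. Once that setup is pinned down, the inequality follows in two lines from the non-negativity of KL divergence.
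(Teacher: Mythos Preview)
Your proposal is correct and matches the paper's own argument essentially line for line: the paper expands $I(z;a)$ as an integral, replaces $-\int p(z)\log p(z)\,{\rm d}z$ by $-\int p(z)\log t(z)\,{\rm d}z$ via $D_{\mathrm{KL}}[p\|t]\ge 0$, and regroups to obtain $\mathbb{E}_a[D_{\mathrm{KL}}(p(\cdot\mid a)\|t(\cdot))]$, which is exactly your multiply-and-divide trick followed by dropping the non-negative $D_{\mathrm{KL}}(p\|t)$ term.
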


The proof of \cref{proposition2} is deferred to \cref{Proof of Theorems}. To convert \cref{equ:ub} into a tractable optimization objective, we choose $p_\phi(\cdot|\tau_{1:t-1})$ to be the prior distribution $t(z)$ at $t-1$ in practice. Since $\tau_{1:t}$ has two additional terms ($a_{t-1}$ and $s_t$) compared with $\tau_{1:t-1}$, we surprisingly find that $p(\cdot|a)$ in \cref{equ:ub} can be cast to $p_\phi(\cdot|\tau_{1:t})$. This is because $a_{t-1}$ generated by $\pi^b(\cdot|s_{t-1})$ carries the information of the behavior policy. Besides, $s_t$, which is sampled from $P(\cdot|s_{t-1}, a_{t-1})$, is independent of the behavior policy. We thus get the following debias loss $\mathcal{L}_{\rm Debias}$:
\begin{equation} \label{pratical_L2}
\mathcal{L}_{\rm Debias}(\phi)  = \sum_{\substack{M_i \in \mathcal{M}, \\ \tau^i_{1:H} \in D_i, \\ t \in \{2, \cdots, H\}} }D_{\mathrm{KL}}\left[p_\phi(\cdot|\tau^i_{1:t}) \| p_\phi(\cdot|\tau^i_{1:t-1})\right],
\end{equation}

For task $M_i$, we can get a trajectory segment $\tau_{1:H}$ with length $H$ given a full trajectory randomly sampled from $D_i$.
% Since $s_0 \in \tau_{0:H-1}$ is sampled from $d_0$, we use a zero vector with the same dimension of the action to be $a_{-1}$.

\begin{algorithm}[!t]
    \caption{DORA Training}
    \label{Context-encoder Training}
    \begin{algorithmic}
        \STATE {\bfseries Input:} context encoder $p_{\phi}$, offline datasets $\{D_{i}\}_{i = 1}^{N}$, short trajectory length $H$, batch size $U$, contextual policy $\pi_\theta$, contextual action function $Q_\psi$
       \STATE {\color{gray} //~Context Encoder Training}
       \FOR{step = 1, 2, $\cdots$}
           \FOR{$u = 1$ to $U$}
               \STATE Randomly sample a task $i \in \{1, 2, \cdots, N\}$
               \STATE Sample a trajectory $\tau^{u}$ with length $H$ from $D_i$ randomly
           \ENDFOR
           \STATE Let $\mathcal{B}_1 = \{\tau^{u}\}_{u = 1}^{U}$
           \STATE Infer representation $z_u$ from $\tau^u$ with $p_\phi$, $\forall \tau^u \in \mathcal{B}_1$
           \STATE Update the moving average task representation $\{\bar{z}_i\}_{i = 1}^{N}$
           \STATE Get encoder loss $\mathcal{L}_{\mathrm{DORA}}(\phi)$ from \cref{total_loss}
           \STATE Update encoder $p_{\phi}$ with $\phi \leftarrow \phi - \alpha \nabla_\phi\mathcal{L}_{\mathrm{DORA}}(\phi)$
       \ENDFOR
       \STATE {\color{gray}//~Policy Training}
       \FOR{step = 1, 2, $\cdots$}
           \STATE Randomly get a minibatch $\mathcal{B}_2 = \{\tau^{u}\}_{u = 1}^{U}$
           \STATE Infer representation $z_u$ from $\tau^u$ with $p_\phi$, $\forall \tau^u \in \mathcal{B}_2$
           \STATE Augment the states of minibatch $\mathcal{B}$ with $\{z_u\}_{u = 1}^{U}$
           \STATE Update $\pi_\theta $ and $Q_\psi$ with CQL~\citep{CQL} using $\{z_u\}_{u = 1}^{U}$ on $\mathcal{B}_2$
       \ENDFOR
    \end{algorithmic}
\end{algorithm}

\subsection{ A Practical Implementation } \label{algorithm}

We take the Recurrent Neural Network (RNN) as the backbone of the encoder \cite{GRU}. The encoder receives a historical sequence of state-action pairs to generate the representation. By restricting the length of the RNN, we can adjust the amount of historical trajectory information retained in the encoder \cite{ESCP}, which is helpful when the dynamic changes. Specifically, let $H$ be the history length, then the representation at step $t$ is denoted as $z_i^t \sim p_{\phi}(\cdot | \tau_{t-H:t})$, where $\tau_{t-H:t}= \{a_{t-H}, s_{t-H+1}, \ldots, a_{t-1}, s_t\}$. If the history length is less than $H$, we will use the zero padding trick~\citep{GRU}. The overall architecture is shown in \cref{Framework}.

During the meta-training process, the encoder $p_{\phi}$ is updated using the the following loss:
\begin{equation} \label{total_loss}
\mathcal{L}_{\text{DORA}}(\phi) = \mathcal{L}_{\text{Dist}}(\phi)+ \beta \mathcal{L}_{\text{Debias}}(\phi).
\end{equation}
After the encoder is trained to convergence, we train the meta policy via CQL~\cite{CQL}. During meta-testing, the encoder infers the task representation $z$ based on the trajectories collected up to date. At each step, the current representation $z$ and state $s$ are together input into the meta policy, yielding a policy adapted to the current dynamics. Different from existing approaches~\citep{CORRO, FOCAL}, our approach does not require to pre-collect trajectories before policy evaluation, which achieves fast online adaptation on-the-fly. To sum up, the pseudocodes of training and testing are illustrated in \cref{Context-encoder Training} and \cref{DORA testing}, respectively. We release the code at Github\footnote{\url{https://github.com/LAMDA-RL/DORA}.}.

\section{Related Work}
In this section, we introduce works related to our framework. \cref{related work: Offline Reinforcement Learning} provides an overview of related works in offline RL. \cref{related work: RL in Non-stationary Dynamics} shows how online RL methods handle non-stationary dynamics. Finally, \cref{related work: Offline Meta Reinforcement Learning} delves into the realm of OMRL, exploring relevant research and focusing on the challenges and strategies in this domain.

\subsection{Offline RL} \label{related work: Offline Reinforcement Learning}
Offline RL learns policies from the dataset generated by certain behavior policies without online interaction with the environment, which helps avoid the safety concerns of online RL. The key challenge of offline RL lies in the distribution shift problem \cite{levine2020offline}, thus algorithms in this domain typically impose constraints on the training policy to keep it close to the behavior policy. Many model-free approaches \cite{BCQ, BRAC, CQL,ghosh2022offline, prdc} introduce regularization terms by considering the divergence between the learned policy and the behavior policy to constrain policy deviation. CQL \cite{CQL} directly learns a conservative state-action value function to alleviate the overestimation problem. Model-based methods \cite{MOPO, Morel, COMBO, swazinna2022user} pre-train an environment model from the dataset, then use the model to generate out-of-dataset predictions for state-action transitions. 

\subsection{RL in Non-stationary Dynamics} \label{related work: RL in Non-stationary Dynamics}
 
To overcome the unavoidable perturbations in real world, recent years have seen an increasing number of research into RL in non-stationary dynamics, including continual RL and meta RL methods. Continual learning approaches aim to enable agents to continuously learn when faced with unseen tasks and avoid forgetting old tasks \cite{ContinualRL_survey}. Policy consolidation \cite{kaplanis2019policy} integrates the policy network with a cascade of hidden networks and uses history to regularize the current policy to handle changing dynamics. 
The meta learning methods \cite{MetaRL_survey}, including the gradient-based methods and context-based methods, adapt to new tasks quickly by leveraging experience from training tasks. Specifically, the gradient-based meta RL methods \cite{MAML} update the policy with gradients in the testing tasks, which are not suitable for non-stationary dynamics since real-world tasks may not allow for extra updates of the gradient. Context-based meta RL infers task-relevant information by learning a context encoder. In these methods, PEARL \cite{PEARL} employs variational inference to learn a context encoder. ESCP \cite{ESCP} utilizes an RNN-based context encoder to rapidly perceive changes in dynamics, enabling fast adaptation to new environments. 

However, both continual RL and online context-based RL methods require online interactions, which are infeasible in the offline setting. As far as we know, there is currently no method capable of learning an effective adaptive policy in non-stationary dynamics in the paradigm of offline RL.

\begin{table*}[ht]
\centering
\caption{Performance on the MuJoCo tasks in \textbf{stationary dynamics}. \textbf{Top:} Average normalized return $\pm$ standard deviation over 5 random seeds in testing tasks with \textbf{IID dynamics}. \textbf{Bottom:} Average normalized return $\pm$ standard deviation over 5 random seeds in testing tasks with \textbf{OOD dynamics}.}

\begin{subtable}[]{\textwidth}
\centering
\small{
\begin{tabular}{c r r r r r}
\toprule[1.0pt]
    \multicolumn{1}{c}{Environment} & \multicolumn{1}{c}{Offline ESCP} & \multicolumn{1}{c}{FOCAL} & \multicolumn{1}{c}{CORRO} & \multicolumn{1}{c}{Prompt-DT} & \multicolumn{1}{c}{\textbf{DORA (Ours)}} \bigstrut\\
    \midrule

    Cheetah-gravity  &  78.22 $\pm$ 21.13  &  51.99 $\pm$ 10.44   & 56.70 $\pm$ 15.48 &  49.59 $\pm$ 18.44& \textbf{86.31} $\pm$ 16.45\\
    Pendulum-gravity &  96.55 $\pm$ 17.24  &  74.95 $\pm$ 12.55  &  76.82 $\pm$ 11.39 & 34.71 $\pm$ 15.95 & \textbf{100.08} $\pm$ \,\,\,0.01  \\
    Walker-gravity   &  52.82 $\pm$ 21.01  &  17.22 $\pm$ 12.05  & 44.49 $\pm$ 24.67  & 26.09 $\pm$ \,\,\,4.38 & \textbf{66.87} $\pm$ 21.64  \\ 
    Hopper-gravity   &   \textbf{78.70} $\pm$ 19.17    &  34.10 $\pm$ 11.12     &    76.73 $\pm$ 14.19   & 42.44 $\pm$ 22.19 & 74.68 $\pm$ 17.34  \\
    Cheetah-dof      &  93.90 $\pm$ 24.49  &  39.36 $\pm$ \,\,\,9.27 & 53.80 $\pm$ 13.90 & 43.99 $\pm$ 16.23 & \textbf{97.85} $\pm$ 12.30\\
    Cheetah-torso    &  56.21 $\pm$ 12.37  &  39.35 $\pm$ \,\,\,6.33   &  51.93 $\pm$ 16.22 & 45.30 $\pm$ \,\,\,5.60 & \textbf{61.60} $\pm$ \,\,\,1.19 \\
    
\bottomrule[1.0pt]
\end{tabular}
}
\end{subtable}
\\[3pt]

\begin{subtable}[]{\textwidth}
\centering
\small{
\begin{tabular}{c r r r r r}
\toprule[1.0pt]
    \multicolumn{1}{c}{Environment} & \multicolumn{1}{c}{Offline ESCP} & \multicolumn{1}{c}{FOCAL} & \multicolumn{1}{c}{CORRO} & \multicolumn{1}{c}{Prompt-DT} & \multicolumn{1}{c}{\textbf{DORA (Ours)}} \bigstrut\\
    
    \midrule
    
    Cheetah-gravity &  64.30 $\pm$ 24.69    &  42.84 $\pm$ 12.14 & 38.72 $\pm$ 14.78 &  30.19 $\pm$ \,\,\,7.74 & \textbf{70.05} $\pm$ 17.02\\
    Pendulum-gravity &    86.02 $\pm$ 12.26  &  29.53 $\pm$ 14.41  &  57.21 $\pm$ 18.36 & 20.89 $\pm$ 10.99 & \textbf{98.09} $\pm$ 13.95  \\
    Walker-gravity &   35.64 $\pm$ 20.12  &  12.69 $\pm$ \,\,\,7.08  & 29.02 $\pm$ 19.60  & 18.91 $\pm$ 11.78 & \textbf{43.33} $\pm$ 14.21  \\ 
    Hopper-gravity &   \textbf{71.96} $\pm$ 24.40    &  12.96 $\pm$ \,\,\,4.63     &    16.70 $\pm$ \,\,\,7.73   & 33.52 $\pm$ 10.82 & 71.52 $\pm$ 21.37  \\
    Cheetah-dof & 73.60 $\pm$ 22.73  &  27.36 $\pm$ 12.27 & 43.08 $\pm$ 16.09 & 30.77 $\pm$ 10.46 & \textbf{77.48} $\pm$ 12.25\\
    Cheetah-torso &   56.50 $\pm$ 11.65    &    38.13 $\pm$ \,\,\,6.50   &  50.89 $\pm$ \,\,\,6.69 &34.04 $\pm$ \,\,\,4.83 & \textbf{61.57} $\pm$ \,\,\,1.34 \\
    
\bottomrule[1.0pt]
\end{tabular}
}
\end{subtable}

\label{table1}
\end{table*}

\begin{table*}[htbp]
  \centering
  \caption{Average normalized return $\pm$ standard deviation on the MuJoCo tasks in \textbf{non-stationary dynamics} over 5 random seeds.}
  \small{
    \begin{tabular}{crrrrr}
    \toprule
    \multicolumn{1}{c}{Environment} & \multicolumn{1}{c}{Offline ESCP} & \multicolumn{1}{c}{FOCAL} & \multicolumn{1}{c}{CORRO} & \multicolumn{1}{c}{Prompt-DT} & \multicolumn{1}{c}{\textbf{DORA (Ours)}} \bigstrut\\
    \midrule
    Cheetah-gravity &  65.42 $\pm$ \,\,\,6.54    &  49.96 $\pm$ \,\,\,6.41 & 54.08 $\pm$ \,\,\,8.54 &  23.52 $\pm$ \,\,\,7.75& \textbf{71.61} $\pm$ \,\,\,7.56\\
    Pendulum-gravity &    85.92 $\pm$ 16.37  &  \,\,\,8.66 $\pm$ \,\,\,5.42  &  69.66 $\pm$ 14.80 & 11.36 $\pm$ \,\,\,5.86 & \textbf{97.11} $\pm$ 16.91  \\
    Walker-gravity &   27.34 $\pm$ 10.75  &  17.91 $\pm$ 10.76  &  33.62 $\pm$ 14.86  & 14.49 $\pm$ \,\,\,2.81 & \textbf{40.46} $\pm$ 19.04  \\ 
    Hopper-gravity &   \textbf{54.29} $\pm$ 25.39    &  21.09 $\pm$ \,\,\,7.99     &    53.26 $\pm$ 20.36   & 15.91 $\pm$ 10.13 & 49.74 $\pm$ 23.74 \\
    Cheetah-dof & 80.72 $\pm$ \,\,\,8.49  &  37.27 $\pm$ 13.76 & 52.41 $\pm$ 13.99 & 20.24 $\pm$ 12.66 & \textbf{91.83} $\pm$ 10.57\\
    Cheetah-torso &   53.99 $\pm$ 16.09    &    42.23 $\pm$ 15.01   &  45.58 $\pm$ 15.98 &  33.79 $\pm$ 10.51 & \textbf{60.13} $\pm$ 13.65 \\
    
    \bottomrule
    \end{tabular}%
  }
  \label{table2}%
\end{table*}%

\subsection{OMRL} \label{related work: Offline Meta Reinforcement Learning}
OMRL extends meta RL to the paradigm of offline setting, aiming to generalize from experience of training tasks and facilitate efficient adaptation to unseen testing tasks. Gradient-based OMRL methods \cite{MerPO} require a few interactions to adapt to unseen tasks, but the costly online gradient updates may not be allowed in the real world. In comparison, the context-based OMRL methods employ a context encoder for task inference, showing potential for faster adaptation. FOCAL \cite{FOCAL} uses distance metric learning loss to distinguish different tasks. Recently, the transformer-based approaches \cite{Prompt-DT, CMT} leverage Transformer networks for autoregressive training on blended offline data from multiple tasks, and construct policy by planning or setting return-to-go. As mentioned in our work, another key challenge for OMRL is that the encoder is required to accurately identify environment dynamics while debiasing representations from behavior policy. To alleviate this problem, CORRO \cite{CORRO} employs contrastive learning to train context encoders and trains dynamic models to generate new samples. Nonetheless, these works still suffer from the entanglement of task dynamics and behavior policy experimentally. Besides, existing works need to pre-collect trajectories before evaluation in meta-testing phase \cite{CSRO, GENTLE}, which are not able to handle changing dynamics since the changes are unknown to the agent. From the information bottleneck perspective, we tackle these problems by developing a novel model-free framework that efficiently debiases the representations from the behavior policy and swiftly adapts to non-stationary dynamics. 

% In online meta RL, the encoder and meta policy are usually trained together and adjusted through online interactions with the environment. 
% However, with limited state and action space coverage of the datasets generated by fixed behavior policies, the offline-trained encoder is more susceptible to behavior policies. 
% Specifically, the encoder may incorrectly consider behavior policies as features for identifying tasks. This deficiency may lead to performance degradation of meta policy, as it is crucial for the encoder to accurately identify environment dynamics. Some existing work has addressed this issue from different perspectives \cite{CORRO, BOReL}. However, these approaches still suffer from the confounding effect of behavior policy in task inferring. 

\section{Experiments}\label{sec:exp}
In this section, we conduct the experiments to answer the following questions:
\begin{itemize}
\item How does the learned encoder and contextual policy of DORA perform in unseen tasks of in-distribution (IID) and out-of-distribution (OOD) dynamics?
\item How well does DORA identify and adapt to online non-stationary dynamics?
\item Are the learned representations of DORA debiased from the behavior policy?
\end{itemize}
In \cref{Environments}, we introduce the environments and baselines. In \cref{Performance}, we compare the performance of different algorithms in IID, OOD, and non-stationary dynamics. In \cref{Representation Visualization} and \cref{Representation Tracking}, we visualize the representations in IID dynamics and study the encoder’s sensitivity to non-stationary
dynamics. In \cref{Debiased Representation Studies}, we study whether the learned representations are debiased from behavior policy. In \cref{Ablation}, we ablate multiple design choices in the encoder training process.

\subsection{Environments and Baselines} \label{Environments}

\textbf{Task Description}. We choose MuJoCo tasks for experiments, including \texttt{HalfCheetah-v3}, \texttt{Walker2d-v3}, \texttt{Hopper-v3}, and \texttt{InvertedDoublePendulum-v2}, which are common benchmarks in offline RL \cite{mujoco}.

\begin{figure}[ht] 
  \centering
  \subcaptionbox{DORA}{
    \includegraphics[width=0.468\linewidth, trim = 0cm 0.6cm 1.5cm 1.0cm, clip]{./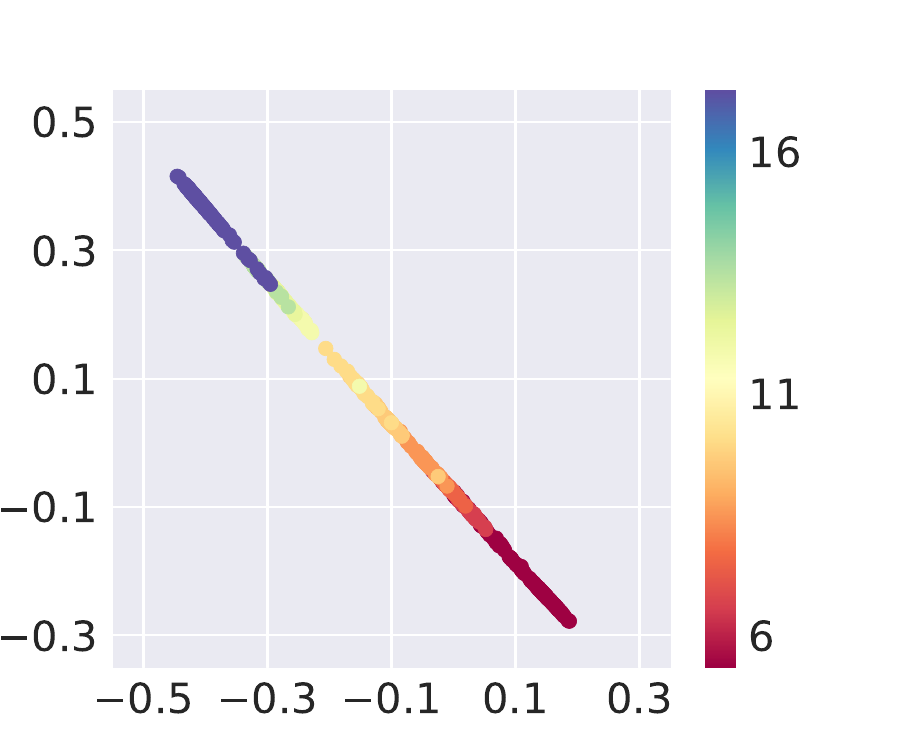}
  }
  \hspace{0.1mm} 
  \subcaptionbox{FOCAL}{
    \includegraphics[width=0.468\linewidth, trim = 0cm 0.6cm 1.5cm 1.0cm, clip]{./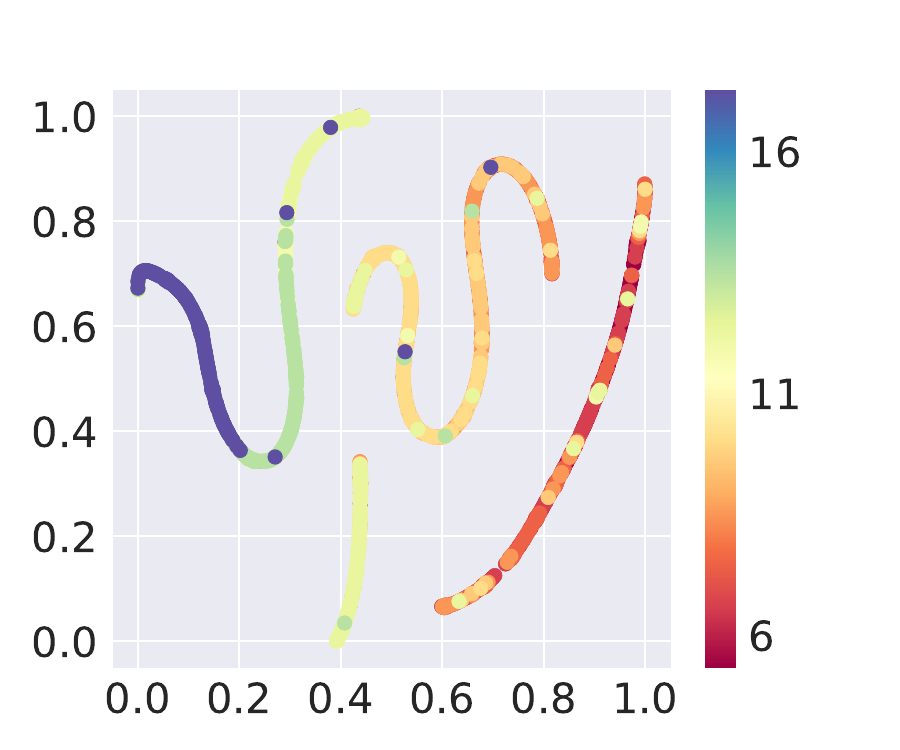}
  }

  \subcaptionbox{CORRO}{%
    \includegraphics[width=0.47\linewidth, trim = 0cm 0.6cm 1.5cm 1.0cm, clip]{./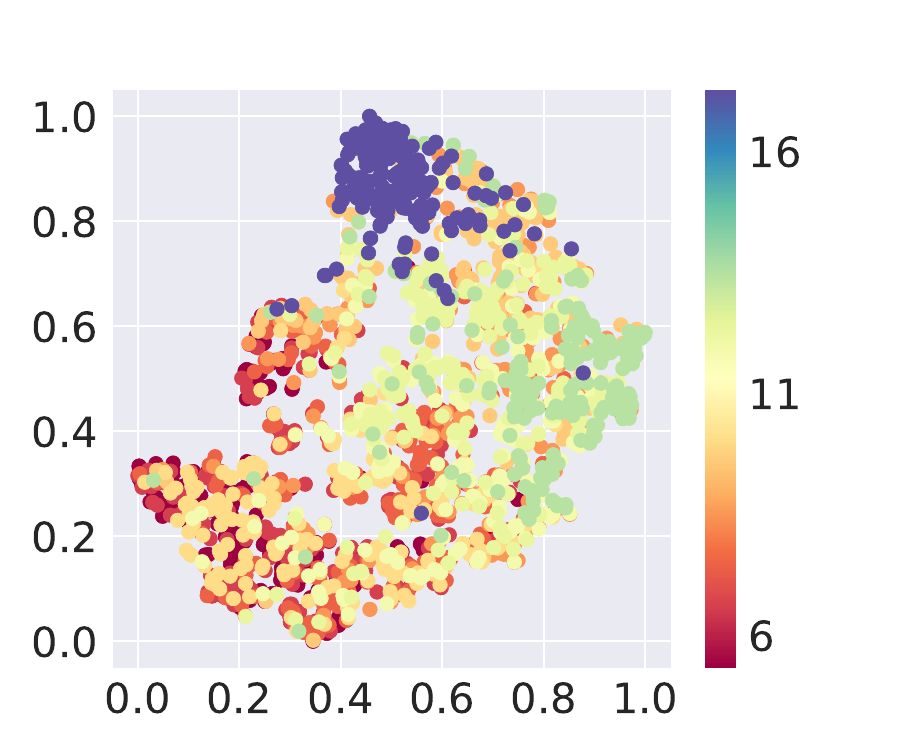}
  } 
  \hspace{0.1mm} 
  \subcaptionbox{Offline ESCP}{
    \includegraphics[width=0.47\linewidth, trim = 0cm 0.6cm 1.5cm 1.0cm, clip]{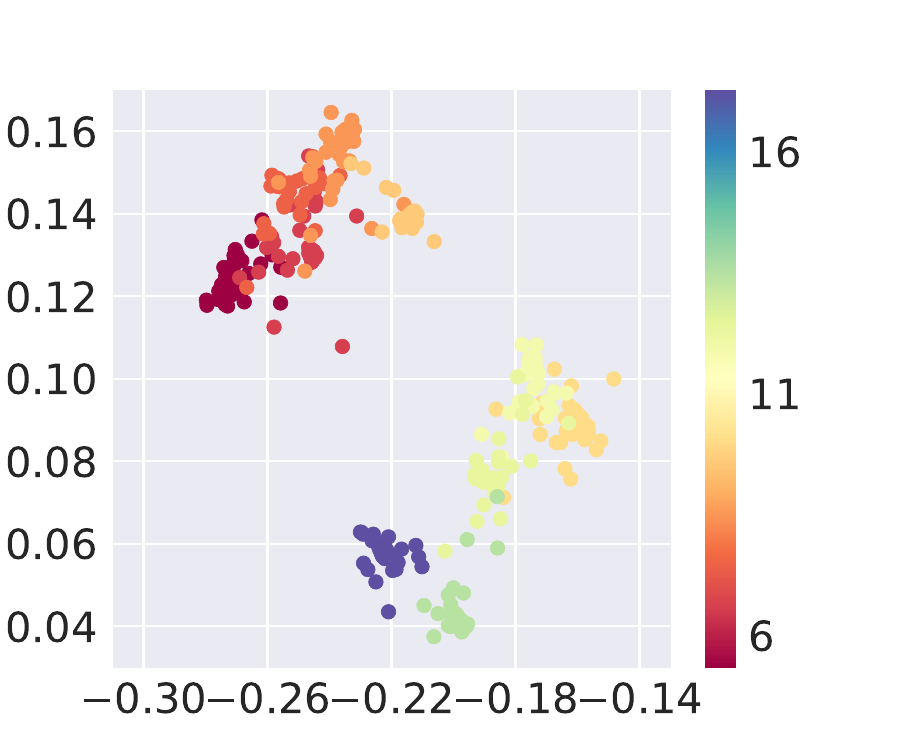}
  } 
\caption{Representation visualization in \texttt{Cheetah-gravity} tasks with IID dynamics. The points are the (projected) representations in a 2D latent space, with the color indicating the real parameters of dynamics.}
\label{Fig: Representation Visualization}
\end{figure}

\textbf{Changing Dynamics}. In order to generate different non-stationary dynamics, we change the physical parameters of the environments, namely \texttt{gravity}, \texttt{dof-damping}, and \texttt{torso-mass}. In \textbf{IID} dynamics, the parameter is uniformly sampled from the same distribution used for sampling training tasks. In \textbf{OOD} dynamics, the parameter is uniformly sampled from a distribution of the 20\% range outside the IID dynamics. The parameter of \textbf{non-stationary dynamics} is sampled from the union of IID and OOD dynamics and changes every 50 timesteps.  For convenience, we will abbreviate the experimental tasks as \texttt{Cheetah-gravity}, \texttt{Pendulum-gravity}, \texttt{Walker-gravity}, \texttt{Hopper-} \texttt{gravity}, \texttt{Cheetah-dof}, and \texttt{Cheetah-torso}.

\textbf{Offline Data Collection}. 
For each training dataset, we use SAC\cite{SAC} to train a policy for each training task independently to make sure the behavior policies are different in every single-task dataset. The offline dataset is then collected from the replay buffer. We gather 200,000 transitions for each single-task dataset, except for the tasks of \texttt{Pendulum-gravity}, which comprises 40,000 transitions.

\textbf{Baselines:} We compare DORA with 4 baselines.
\textbf{CORRO} \cite{CORRO}, \textbf{FOCAL} \cite{FOCAL}, and \textbf{Prompt-DT} \cite{Prompt-DT} are prominent OMRL baselines in recent years. ESCP \cite{ESCP} is an online meta RL approach that effectively adapts to sudden changes. In order to operate this algorithm in the offline setting, we develop the \textbf{offline ESCP} as a baseline.

For all baselines and our method, a history trajectory of length $8$ is used to infer task representations. The debiased loss weight $\beta$ of our method is adjusted for different tasks. More details about the environments, offline datasets, and baselines in our experiments can be found in \cref{Appendix: Experiments Details}.

\begin{figure}[ht]
  \centering
    \includegraphics[width=1.12\linewidth, trim = 4.36cm 0.1cm 0cm 3.0cm, clip]{./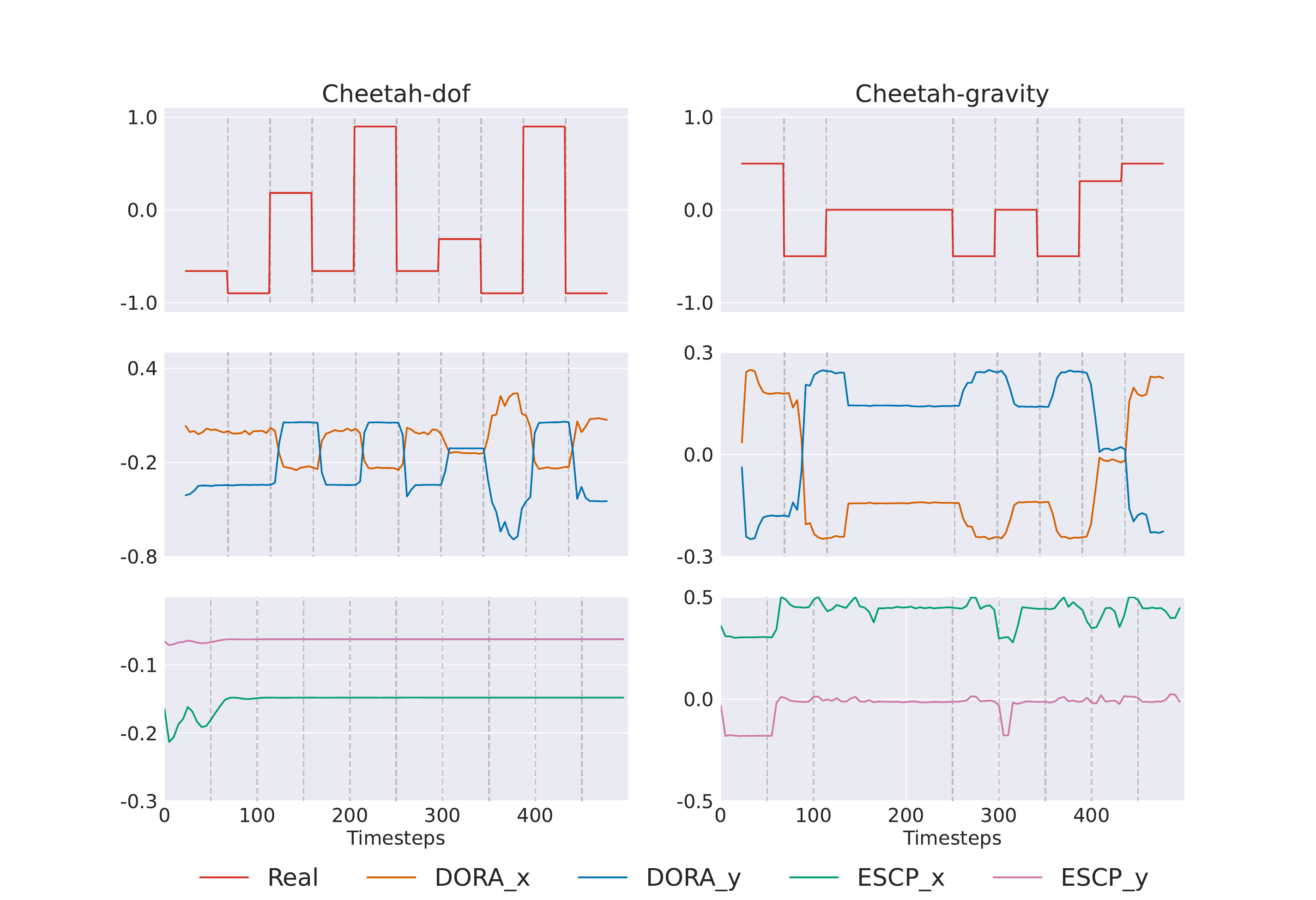}
  \caption{Representation tracking in a single trajectory in non-stationary dynamics. \texttt{Real} represents the normalized real parameters of unseen dynamics, \texttt{DORA\_x} and \texttt{DORA\_y} are the coordinates of the DORA's representations in the 2D latent space, and the same applies to offline ESCP. \textbf{Left:} In \texttt{Cheetah-dof}. \textbf{Right:} In \texttt{Cheetah-gravity}.}
  \label{Fig: Representation Tracking}
\end{figure}

\subsection{Performance} \label{Performance}

We conduct experiments in IID, OOD, and non-stationary dynamics to compare the normalized return of the meta CQL policies learned with encoders from different algorithms. 
During testing, all algorithms are not permitted to pre-collect trajectories on new tasks, which means the encoder has to use short history trajectories generated by contextual policy to infer representations. As shown in \cref{table1},  DORA outperforms the baselines in 5 of the total 6 MuJoCo tasks in both IID and OOD dynamics. Although offline ESCP performs better in \texttt{Hopper-gravity}, its performance suffers from a significant drop of 8.6\% 
from IID to OOD tests, while the drop of DORA is 4.2\%. The poor performance of Prompt-DT reveals that the transformer struggles to unleash its fitting and generalization capabilities in the absence of a well-defined prompt. DORA also excels at handling non-stationary dynamics, and its lead in most of the non-stationary environments is stronger than that in stationary environments, as illustrated in \cref{table2}. These results suggest that DORA is effective in dealing with unseen non-stationary dynamics. As all baselines share the same contextual policies, it can thus be inferred that our encoder is more sensitive to the environment dynamics.

% These results show that DORA is effective in dealing with unseen non-stationary dynamics, which indicates that our encoder is more sensitive to environment dynamics since the contextual policies are the same across all baselines. 

\subsection{Representations in IID Dynamics} \label{Representation Visualization}

In this section, we compare the latent representations of DORA with other context-based OMRL baselines in \texttt{Cheetah-gravity} with IID dynamics, which are visualized in \cref{Fig: Representation Visualization}. The representations of DORA and offline ESCP are encoded into two-dimensional vectors, while the representations of CORRO and FOCAL are encoded into higher dimensions and projected into 2D space via t-SNE \cite{t-SNE}, following the approach used in the original paper.  
\begin{figure}[ht] 
  \centering
  \begin{subfigure}[b]{0.235\textwidth}
    \includegraphics[width=\textwidth, trim = 1cm 0.5cm 2.2cm 1.0cm, clip]{./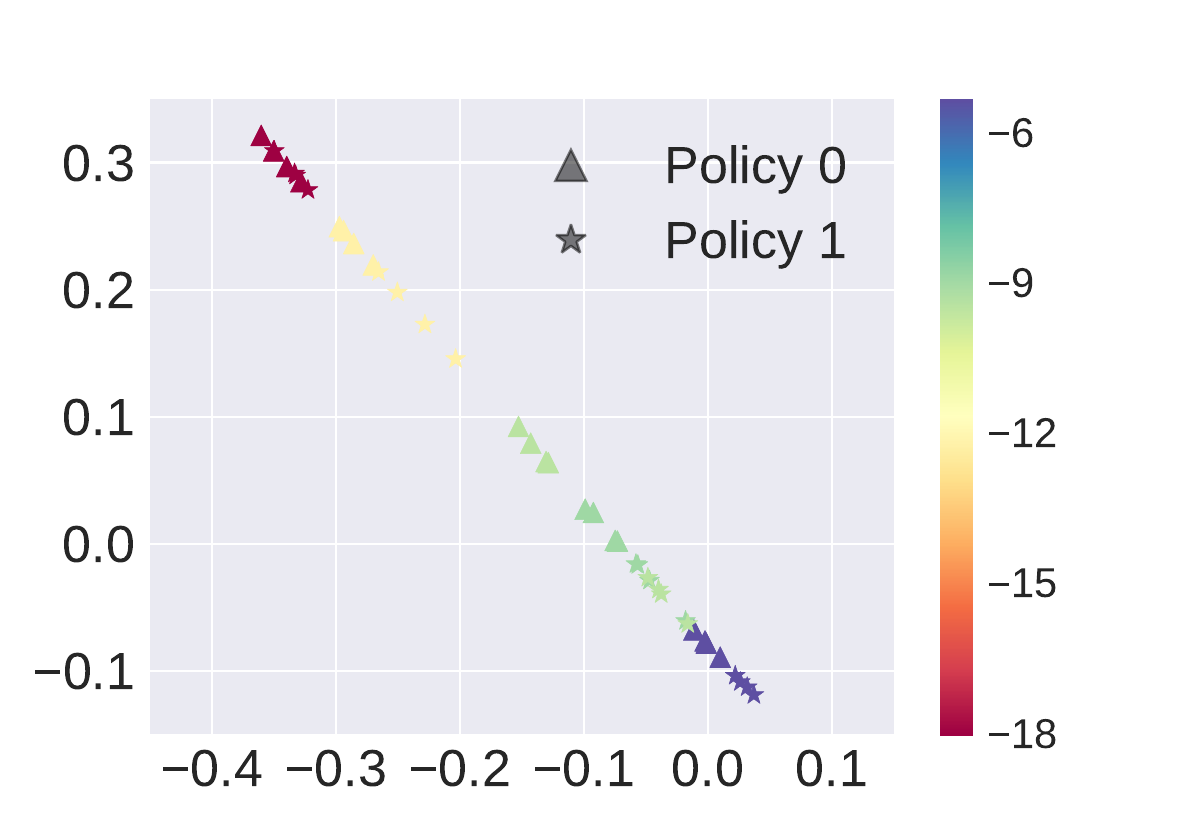}
    \caption{Cheetah-gravity}
    \label{fig: debias1}
  \end{subfigure}
  \hspace{0.05mm}
  % \quad
  \begin{subfigure}[b]{0.235\textwidth}
    \includegraphics[width=\textwidth, trim =  1cm 0.5cm 2.2cm 1.0cm, clip]{./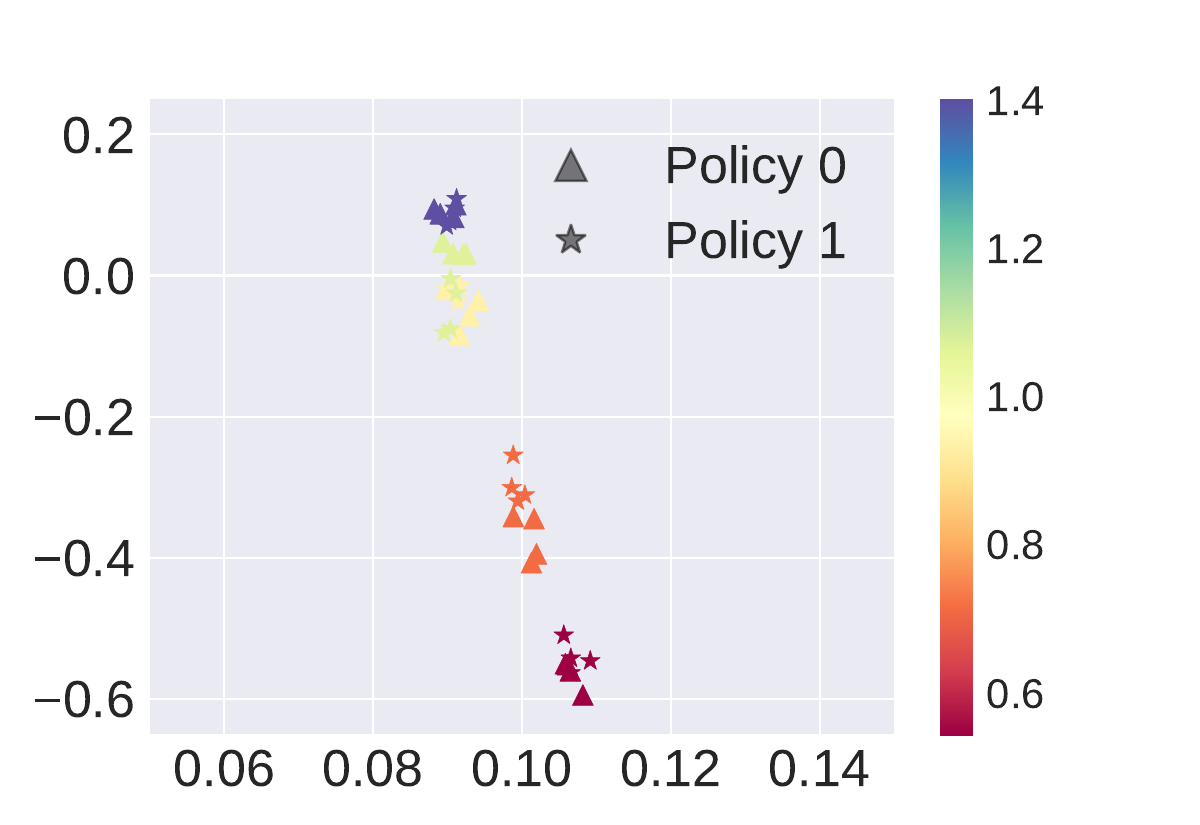}
    \caption{Cheetah-dof}
    \label{fig: debias2}
  \end{subfigure}
\caption{Visualization on task representations generated with 2 different context-collection policies in 5 unseen dynamics. Points of different shapes represent different policies. \textbf{Left:} In \texttt{Cheetah-gravity}. \textbf{Right:} In \texttt{Cheetah-dof}.} 
\label{Fig: debias}
\end{figure}
For each algorithm, we sample 200 short trajectories from the tasks with IID dynamics. The color of the encodings represents the real value of the changing dynamics parameter, and the color bar is on the right of each sub-figure correspondingly. The visualization results suggest that DORA's representations appear as a straight line with parameters of dynamics gradually increasing or decreasing from one end to the other. 
% We also find that representations of similar dynamics stay closer to each other. 
In contrast, representations of CORRO and FOCAL are much more cluttered. Notably, although ESCP extracts linear-shape representations in the online setting, the offline ESCP fails to achieve the same effect. The shape of representations indicates that DORA's encoder extracts more informative encodings and distinguishes the dynamics better than other baselines. We also visualize the encodings in OOD dynamics in \cref{Appendix: Representation Visualization of DORA}, which indicates DORA's representations can be extended to unseen OOD dynamics.
% the encoder takes the environment dynamics rather than behavior policy as a feature to identify tasks since the behavior policies differ greatly from each other in each single-task dataset. 
% The representation visualization results of DORA are in \cref{Appendix: Representation Visualization of DORA}.

\subsection{Representation Tracking in Changing Dynamics} \label{Representation Tracking}

In order to study the encoder's sensitivity to changing dynamics, we track the changes of representations during the adaptation in part of a single trajectory, which is shown in \cref{Fig: Representation Tracking}. \textit{Real} represents the normalized real parameters of unseen dynamics, \textit{x} and \textit{y} are the coordinates of the representations in the two-dimensional latent space. The environment dynamics change every 50 timesteps. It is evident to find that DORA's encoding promptly follows almost every sudden change of dynamics, while offline ESCP generates few responses. As illustrated by these experimental results, DORA's encoder infers robust representations and responds to non-stationary dynamics swiftly and precisely.

\subsection{Debiased Representation Visualization} \label{Debiased Representation Studies}
In this part, we study whether the learned representations are debiased from behavior policy. We first randomly choose the datasets of training tasks and train a Behavior Clone (BC, \citet{BC}) policy using each chosen dataset. Then these trained BC policies are used to roll out trajectories of $H$ timesteps in different unseen IID and OOD dynamics. Subsequently, we adopt DORA's encoder to generate representations with these short trajectories, which are visualized in \cref{Fig: debias}. 
\begin{figure}[ht] 
  \centering

  \begin{subfigure}[b]{0.23\textwidth}
    \centering
    \includegraphics[width=\textwidth, trim =  1.7cm 0.8cm 2cm 0.5cm, clip]{./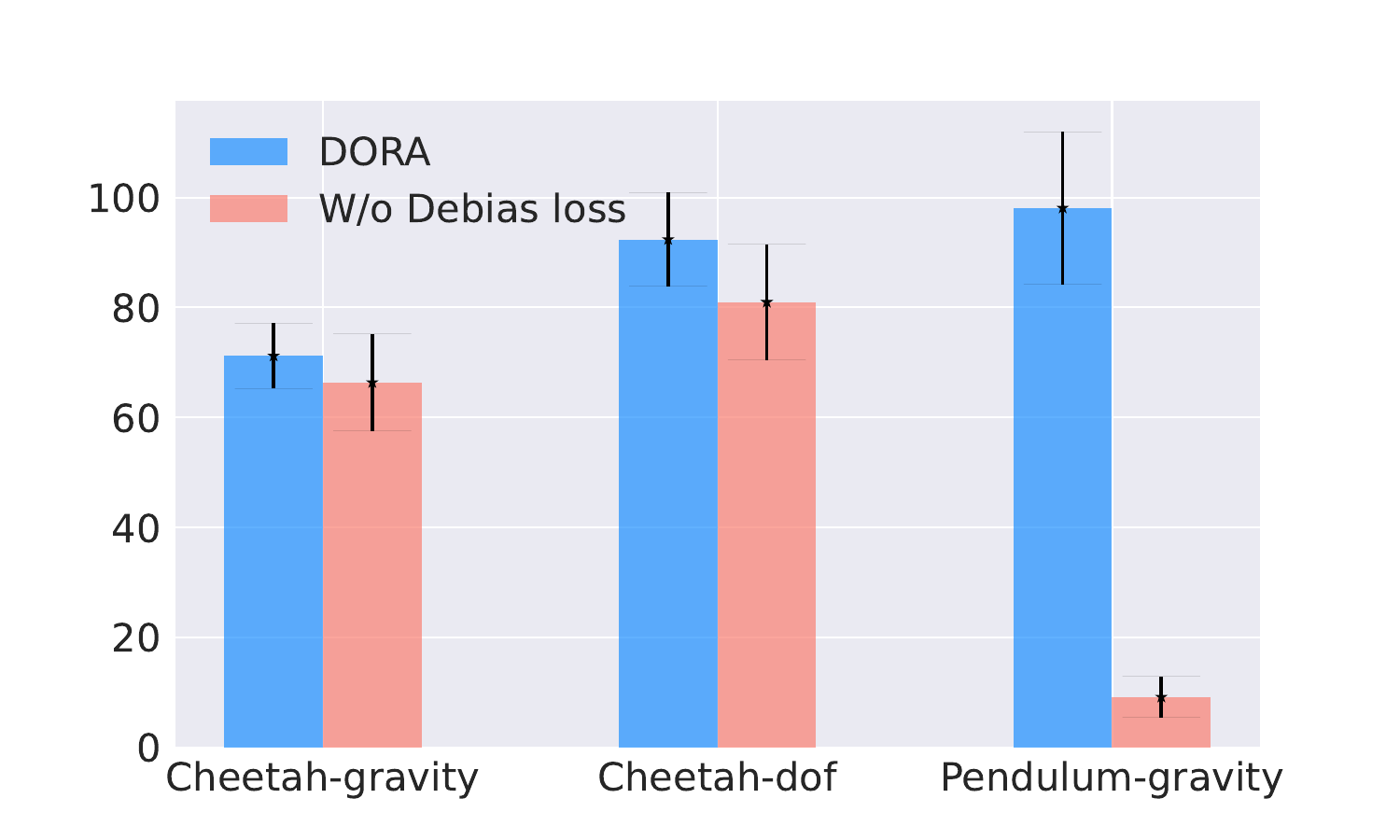}
    \caption{Ablation on debias loss}
    \label{Fig: ablation_loss}
  \end{subfigure}
  \hspace{0.05mm}
  \begin{subfigure}[b]{0.23\textwidth}
    \centering
    \includegraphics[width=\textwidth, trim =  1.7cm 0.8cm 2cm 0.5cm, clip]{./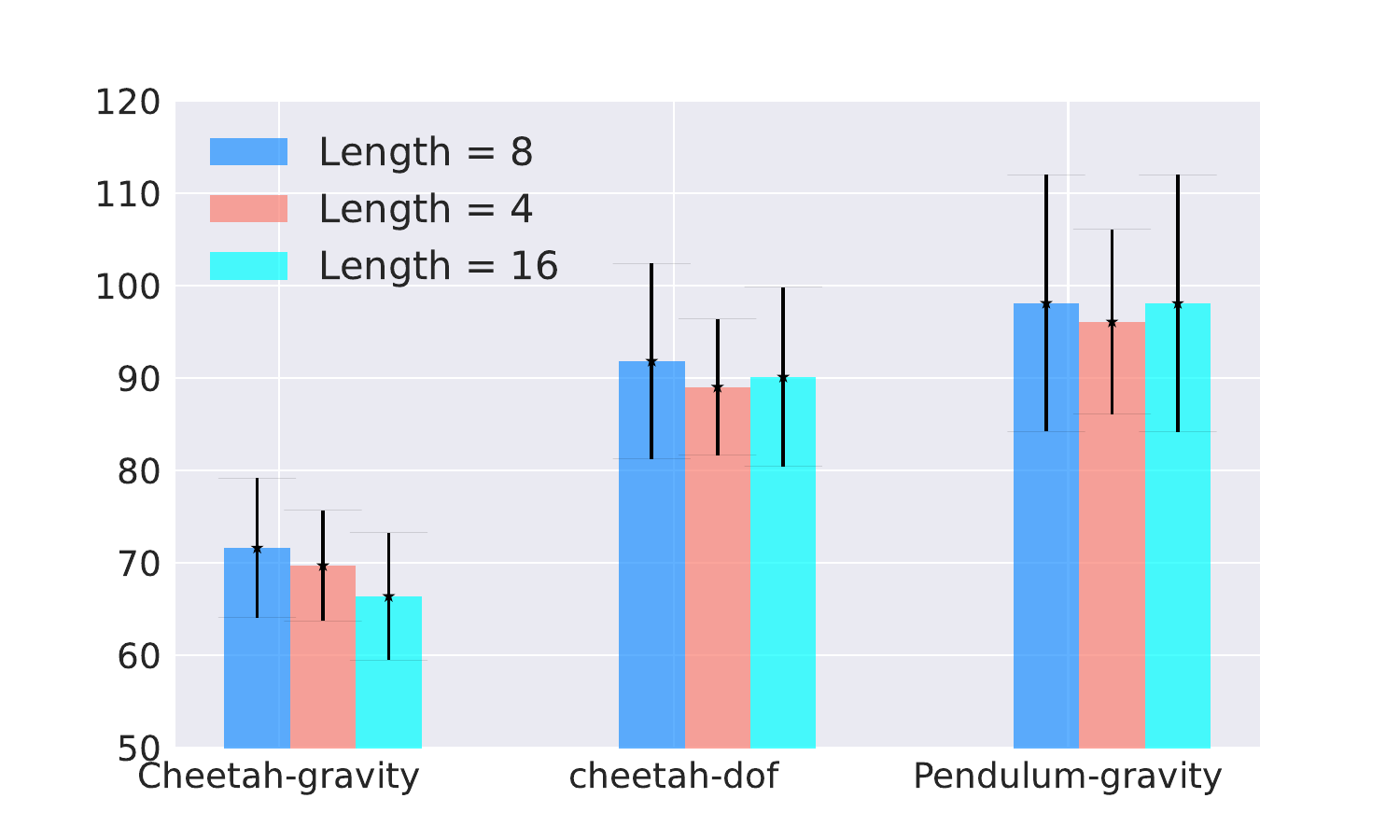}
    \caption{Ablation on history length}
    \label{Fig: ablation_length}
  \end{subfigure}
  
\caption{Ablation studies: Average normalized return of DORA on 3 environments over 5 random seeds. The error bar stands for the standard deviation. \textbf{Left:}  DORA with and without the debias loss.  \textbf{Right:} RNN history lengths of 4, 8, and 16. }
\label{Fig: Ablation}
\end{figure}
The experimental results show that representations inferred from trajectories generated by different policies still cluster together in the same task dynamics. Moreover, in comparison with the results of DORA in \cref{Fig: Representation Visualization}, these representations are encoded to similar coordinates according to the real parameter of dynamics. These results powerfully demonstrate that DORA learns a dynamic-sensitive encoder, which takes the environment dynamics rather than behavior policy as a feature to generate debiased representations.

\subsection{Ablation Studies } \label{Ablation}
We first conduct ablation experiments to study the interdependence of the distortion loss and the debias loss.  In \cref{Fig: ablation_loss}, the ablation experiments in 3 randomly chosen environments show that the debias loss significantly improves DORA. Notably, DORA can not converge without the distortion loss, so this ablated version is not included in the bar chart. Besides, to assess the impact of RNN history length on DORA's performance, we compared instances with RNN lengths of 4, 8, and 16. The experimental results in \cref{Fig: ablation_length} demonstrate that DORA is not sensitive to the RNN history length. Additionally, we conducted sensitivity tests on the hyper-parameter $\beta$ in \cref{Appendix: Ablation Studies on beta}.

\section{Summary}
In this paper, we consider the online adaptation issue in non-stationary dynamics from a fully offline perspective. We propose a novel offline representation learning framework DORA, in which the context encoder can efficiently identify changes in task dynamics and the meta policy is capable of rapidly adapting to online non-stationary dynamics. We follow the information bottleneck principle to formalize the offline representation learning problem. To conquer this problem, we derive a lower bound for the encoder to capture task-relevant information and an upper bound to debias the representation from behavior policy. We compare DORA with other OMRL baselines in environments of IID, OOD, and non-stationary dynamics. Experimental results show that DORA debiases the representation from behavior policy and exhibits better performance compared with baselines.

\section*{Impact Statement}
Our work focuses on offline representation learning, we anticipate that its societal impact lies in advancing the field of Machine Learning. As our work primarily involves theoretical advancements and algorithmic innovations, we do not foresee specific ethical or safety concerns associated with our contributions. The potential societal impact of our work lies in fostering more efficient and adaptable offline RL policies, which can benefit various applications across industries. We believe that the implications of our research align with well-established ethical standards in the field of machine learning.

\section*{Acknowledgements}

The authors would like to thank Fanming Luo, Chen-Xiao Gao and the anonymous reviewers for their support and helpful discussions on improving the paper. 

% \newpage
% In the unusual situation where you want a paper to appear in the
% references without citing it in the main text, use \nocite
% \nocite{langley00}

\bibliography{example_paper}
\bibliographystyle{icml2024}

%%%%%%%%%%%%%%%%%%%%%%%%%%%%%%%%%%%%%%%%%%%%%%%%%%%%%%%%%%%%%%%%%%%%%%%%%%%%%%%
%%%%%%%%%%%%%%%%%%%%%%%%%%%%%%%%%%%%%%%%%%%%%%%%%%%%%%%%%%%%%%%%%%%%%%%%%%%%%%%
% APPENDIX
%%%%%%%%%%%%%%%%%%%%%%%%%%%%%%%%%%%%%%%%%%%%%%%%%%%%%%%%%%%%%%%%%%%%%%%%%%%%%%%
%%%%%%%%%%%%%%%%%%%%%%%%%%%%%%%%%%%%%%%%%%%%%%%%%%%%%%%%%%%%%%%%%%%%%%%%%%%%%%%
\newpage
\appendix
\onecolumn
\section{Proofs and Explanations}
\subsection{Proof of Theorems} \label{Proof of Theorems}

\lowerb*

\begin{proof}
We follow the proof of \citet{CORRO}.
\begin{align*}
I(z ; M) &= \mathbb{E}_{\mathcal{M},z}\left[\log \frac{p(M \mid z)}{p(M)}\right] \\ 
& \overset{(a)}= \mathbb{E}_{\mathcal{M},z}\left[\log \mathbb{E}_\tau\left[\frac{p_\phi(z \mid \tau)}{p(z)}\right]\right] \\
&\geq \mathbb{E}_{\mathcal{M},z} \mathbb{E}_\tau\left[\log \frac{p_\phi(z \mid \tau)}{p(z)}\right] \\
&= \mathbb{E}_{\mathcal{M}, \tau, z}\left[\log \frac{p_\phi(z \mid \tau)}{p(z)}\right] \\
&= - \mathbb{E}_{\mathcal{M}, \tau, z} \left[ \log \left( \frac{p\left(z\right)}{p_\phi\left(z \mid \tau\right)} N \right) \right] + \mathrm{log}N \\
& \overset{(b)} \geq - \mathbb{E}_{\mathcal{M}, \tau, z} \left[ \log \left(1+\frac{p\left(z\right)}{p_\phi\left(z \mid \tau\right)}(N-1)\right) \right] + \mathrm{log}N \\
&= - \mathbb{E}_{\mathcal{M}, \tau, z} \left[ \log \left( 1+\frac{p\left(z\right)}{p_\phi\left(z \mid \tau\right)}(N-1) \underset{M_i\in \mathcal{M} \backslash\{M\}}{\mathbb{E}} [\frac{p_\phi\left(z \mid \tau^i\right)}{p\left(z\right)}]\right) \right] + \mathrm{log}N \\
&\approx - \mathbb{E}_{\mathcal{M}, \tau, z} \left[ \log \left( 1+\frac{p\left(z\right)}{p_\phi\left(z \mid \tau\right)} \sum_{M_i\in \mathcal{M} \backslash\{M\}} \frac{p_\phi\left(z \mid \tau^i\right)}{p\left(z\right)}\right) \right] + \mathrm{log}N \\
&= \mathbb{E}_{\mathcal{M}, \tau, z} \left[ \log \left( \frac{\frac{p_\phi\left(z \mid \tau\right)}{p\left(z\right)}}{\frac{p_\phi\left(z \mid \tau\right)}{p\left(z\right)}+\sum_{M_i\in \mathcal{M} \backslash\{M\}} \frac{p_\phi\left(z \mid \tau^i\right)}{p\left(z\right)}}\right) \right] + \mathrm{log}N \\
&= \mathbb{E}_{\mathcal{M}, \tau, z} \left[ \log \left( \frac{\frac{p_\phi\left(z \mid \tau\right)}{p\left(z\right)}}{\sum_{j=1}^{N} \frac{p_\phi\left(z \mid \tau^i\right)}{p\left(z\right)}}\right) \right] + \mathrm{log}N. \\
\end{align*}
Here, $\mathcal{M} \backslash\{M\}$ is a set of tasks in $\mathcal{M}$ except the task $M$. $(a)$ is derived from the following expression:
\begin{align*}
\begin{split}
\frac{p(z \mid M)}{p(z)} =\int \frac{p(\tau \mid M) p(z \mid \tau, M)}{p(z)} {\rm d} \tau  =\int \frac{p(\tau ) p_\phi(z \mid \tau)}{p(z)} {\rm d} \tau   =\mathbb{E}_\tau \left[\frac{p_\phi(z \mid \tau)}{p(z)}\right].  
\end{split}
\nonumber 
\end{align*}

The inequality at $(b)$ can be proved as follows:
\begin{align*}
&\mathbb{E}_{\tau, z}\left[\log \left(1+\frac{p(z)}{p(z \mid \tau)}(N-1)\right)\right] - \mathbb{E}_{\tau, z}\left[\log \left(\frac{p(z)}{p(z \mid \tau)}N\right)\right] \\
&= \mathbb{E}_{\tau, z}\left[\log \left(\frac{1}{N}\cdot\frac{p(z \mid \tau)}{p(z)} + \frac{N-1}{N} \cdot 1 \right)\right]\\
& \overset{(c)} \geqslant \mathbb{E}_{\tau, z}\left[  \frac{1}{N}\log\frac{p(z \mid \tau)}{p(z)} + \frac{N-1}{N}\log1 \right]\\
&= \frac{1}{N} \mathbb{E}_{\tau, z}\left[ \log\frac{p(z \mid \tau)}{p(z)}
 \right]\\
&= \frac{1}{N} I(z;\tau) \\
&\geqslant 0
\end{align*}
The inequality at (c) is derived from Jensen's Inequality. Thus, the proof is completed.
\end{proof}

% \begin{lemma}
% Given a task $M \sim P(M)$, offline dataset $D$ is collected in $M$ by a certain behavior policy, $\pi_b$ denotes an empirical distribution of the behavior policy, $\tau$ is a trajectory sampled from $D$ and $z\sim p(z|\tau)$. $N_S$ is the number of total transitions in $D$. Let $x_k=\pi_{\beta}\left(\cdot \mid s_k\right)$, then we have:

% \begin{align*}  
% I\left(z ; \pi_b\right) =\frac{1}{N_S} \sum_{k=1}^{N_S} I(z ;x_k)
% \end{align*}
% \end{lemma}

% \begin{proof}
% \begin{align*}  
% I\left(z ; \pi_b\right) =
%  I(z ;a \mid s)=E_s[I(z ; a) | s] =\frac{1}{N_S} \sum_{k=1}^{N_S} I\left[z ;x_k \right] 
% \end{align*}
% \end{proof}

\upperb*

\begin{proof}
\begin{align*} 
I(z ; a)&=  \int\int p\left(z, a\right) \log \frac{p\left(z \mid a \right)}{p(z)} {\rm d}z {\rm d}a \\
&= \int\int p\left(z, a\right) \log p\left(z \mid a\right){\rm d}z{\rm d}a-\int p(z) \log p(z) {\rm d}z \\
& \overset{(d)} \leqslant \int \int p\left(z, a\right) \log p\left(z \mid a\right){\rm d} z {\rm d}a-\int p(z) \log t(z){\rm d}z \\
&= \int\int p\left(a\right) p\left(z \mid a\right) \log \frac{p\left(z \mid a\right)}{t(z)} {\rm d}z {\rm d}a  \\
& = \mathbb{E}_a \left[  \int p\left(z \mid a\right) \log \frac{p\left(z \mid a\right)}{t(z)} {\rm d} z  \right]  \\
&= \mathbb{E}_a \left[  D_{\rm KL}\left[p\left(\cdot \mid a\right) \| t(\cdot)\right] \right].
\end{align*}
The inequality at $(d)$ is derived from $D_{\rm KL}\left[p(\cdot) \| t(\cdot)\right] \geq 0$. 
\end{proof}

\subsection{Explanations of the Approximation in Distortion Loss} \label{Explanations of the Approximation in Distortion Loss}

In this section, we further explain the reason for the approximation of $p(z|\tau)/p(z)$ by $S(z, \bar{z}) = \mathrm{exp}(-||z - \bar{z}||^2/{\alpha})$. From Bayesian theory, we know that the bigger $\frac{p(z|\tau)}{p(z)}$ is, the more likely $z$ correlates with $\tau$. In practice, it may be intractable to get the marginal probability $p(z)$. Thus, inspired by InfoNCE, we learn a positive real score function $S(z,\tau)$ to approximate $\frac{p(z|\tau)}{p(z)}$. Given $N$ trajectories $\tau^1, \tau^2, \cdots, \tau^N$ sampled from $N$ different dynamics respectively, \cref{proposition1} tells us that, to maximize $I(z; M)$, the encoder is required to predict the context $z$ with $\tau_i$ as input and $z$ must has least correlation with all $\tau_j\ (j =1,2, \cdots, i-1, i+1, \cdots, N)$. This gives us an intuition that $S(z, \tau)$ should measure the similarity of $z$ and $\tau$. Practically, we implement $S$ with a Gaussian kernel function (though there can be more choices, such as the cosine similarity used by InfoNCE). Moreover, since $\tau$ has an indefinite length, it's difficult to directly calculate the similarity of $z$ and $\tau$. Therefore, in our implementation, we turn to calculate the similarity between $z$ and $\bar{z}$, where $\bar{z}$ is the moving average of all the contexts from the dynamics in which we collect $\tau$.

\section{DORA Testing}
The pseudocodes of the testing phase of DORA is illustrated in Algorithm 2.

\begin{algorithm} \label{DORA testing}
    \caption{DORA Test}
    \begin{algorithmic}
        \STATE {\bfseries Input:} testing tasks $\{M_{i}^{\mathrm{test}}\}_{i = 1}^{N}$, pre-trained context encoder $p_{\phi}$, pre-trained contextual policy $\pi_\theta(s, z)$, episode length $T_s$
        \FOR{each task $M_i \in \{M_{i}^{\mathrm{test}}\}_{i = 1}^{N}$}
            \STATE Initialize a history trajectory $\tau$ of fixed length $H$ with zeros
            \STATE Get init state $s_0$ of $M_i$, append $(s_0, 0)$ to $\tau$
            \FOR{$ 0 \leq t \leq T_s$}
                \STATE Infer $z^t$ from $\tau$ with $p_\phi$
                \STATE Get action $a = \pi_\theta(s, z^t)$
                \STATE Step in env and get the next state $s'$
                \STATE $s \leftarrow s'$
                \STATE Append $(s, a)$ to $\tau$
            \ENDFOR
        \ENDFOR
    \end{algorithmic}
\end{algorithm}

\section{Experiments Details} \label{Appendix: Experiments Details}

\subsection{Task Description} 
We choose several MuJoCo tasks for experiments, including \texttt{HalfCheetah-v3}, \texttt{Walker2d-v3}, \texttt{Hopper-v3}, and \texttt{InvertedDoublePendulum-v2}, which are common benchmarks in offline RL \cite{mujoco}. In \texttt{HalfCheetah}, \texttt{Walker2d}, and \texttt{Hopper}, agents with several degrees of freedom need to learn to move forward as fast as possible. \texttt{InvertedDoublePendulum} originates from the classical control problem \texttt{Cartpole}, in which the agent needs to push the cart left and right to balance the pole on top of the bottom pole.

\subsection{Environment Details} \label{Appendix: Environment Details}
In this section, we provide detailed information about the dynamic-changing environments.

\textbf{Gravity:}  \texttt{Gravity} is a global parameter in MuJoCo, which mainly affects the agent’s fall speed and vertical pressure. The dynamic is sampled by multiplying the default gravity $g_0$ with $1.5^\mu, \mu \sim U[-a, a]$, where $a=1.5, 1.8$ for IID and non-stationary dynamics, respectively. We uniformly sample $\mu$ from $U[-1.8, -1.5]\cup[1.5, 1.8] $ for OOD dynamics.

\textbf{Dof-Damping:} \texttt{Dof-damping} refers to the damping value matrix applied to all degrees of freedom of the agent, serving as linear resistance proportional to velocity. The dynamics are sampled by multiplying the default damping matrix $A_0$ with $1.5^\mu, \mu \sim U[-a, a]$, where $a=1.5, 1.8$ for IID and non-stationary dynamics, respectively. We uniformly sample $\mu$ from $U[-1.8, -1.5]\cup[1.5, 1.8] $ for OOD dynamics.

\textbf{Torso-mass:} \texttt{Torsoz-mass} is the mass of the agent's torso. The dynamic is sampled by multiplying the default mass of torso $m_0$ with $1.5^\mu, \mu \sim U[-a, a]$, where $a=1.5, 1.8$ for IID and non-stationary dynamics, respectively. We uniformly sample $\mu$ from $U[-1.8, -1.5]\cup[1.5, 1.8] $ for OOD dynamics.

Besides, each environment contains 10 tasks for training and 10 tasks for testing for both IID, OOD, and Non-stationary dynamics. 

% Transition dynamics are varied in body mass, inertia, damping and frictions, described by 41 parameters. Each parameter is sampled by multiplying the default value with $1.5^\mu, \mu \sim U[-3,3]$. The reward function is $r_t=$ $v_t-10^{-3} \cdot\left\|a_t\right\|_2^2$, where $v_t$ is the current forward velocity of the hopper and $a_t$ is the action. The maximal episode steps are set to 200.

\subsection{Details of the Offline Dataset} \label{Appendix: Details of Offline Dataset}

% \textbf{Offline Data Collection}. 
% For each training dataset, we use SAC \cite{SAC} to train a policy for each training task independently to make sure the behavior policies are different in every single-task dataset. The offline dataset is then collected from the replay buffer. We gathered 200,000 transitions for each single-task dataset, except for \texttt{Pendulum-gravity}, which comprises 40,000 transitions. 

\textbf{Maximum and Minimum Returns of Offline Datasets.}
In \cref{table3}, we explicitly provide the maximum and minimum returns of offline datasets, which are used to calculate the normalized return in the policy evaluation. Specifically, the evaluation return $x$ for a particular is normalized using the formula $\frac{x - x_{\mathrm{min}}}{x_{\mathrm{max}}-   x_{\mathrm{min}}} \times 100 $, where $x_{\mathrm{max}}$ and $x_{\mathrm{min}}$ denote the  corresponding maximum and minimum returns of each dataset. 

\begin{table*}[htbp]
  \centering
  \caption{Maximum and minimum returns of offline datasets.}
  \small{
    \begin{tabular}{ccccccc}
    \toprule
    \multicolumn{1}{c}{} & \multicolumn{1}{c}{Cheetah-gravity} & \multicolumn{1}{c}{Cheetah-dof} & \multicolumn{1}{c}{Cheetah-torso\_mass} & \multicolumn{1}{c}{Hopper-gravity} & \multicolumn{1}{c}{Walker-graavity} & \multicolumn{1}{c}{Pendulum-gravity} \bigstrut\\
    \midrule
    Maximum &  9218.48 & 8703.55 & 10060.59 & 3536.97 & 4630.10 & 9351.94\\
    Minimal & -460.33 & -613.09 & -479.86 & -1.22 & -110.50 & 26.71\\
    
    \bottomrule
    \end{tabular}%
  }
  \label{table3}%
\end{table*}%

\subsection{Introduction of Baselines}  \label{Appendix: intro of baselines}
We compare DORA with 4 baselines which are listed below. 

\textbf{CORRO}. CORRO \cite{CORRO} designs a bi-level task encoder where the transition encoder is optimized by contrastive task learning and the aggregator encoder gathers all representations. CORRO uses Generative Modeling and Reward Randomization to generate negative pairs for contrastive learning. In our setting, the reward function remains the same across tasks thus we adopt the approach of Generative Modeling to compare.

\textbf{FOCAL}. FOCAL \cite{FOCAL} uses distance metric loss to learn a deterministic contextual encoder, which is also under the paradigm of contrastive learning. Additionally, FOCAL combines BRAC \cite{BRAC} to constrain the bootstrapping error.

\textbf{Prompt-DT}. Prompt-DT \cite{Prompt-DT} employs the Transformer to tackle OMRL tasks. It utilizes a short segment of task trajectory containing task-specific information as the prompt input, guiding the transformer to model trajectories from different tasks. For each task dataset, we select the top 5 trajectories with the highest cumulative rewards to construct the task prompt. The remaining trajectories are then used for training.

\textbf{Offline ESCP}. ESCP \cite{ESCP} is an online meta RL approach that proposes variance minimization and Relational Matrix Determinant Maximization in optimizing the encoder to adapt to environment sudden changes. In order to fairly compare ESCP with other algorithms in offline settings, we develop an offline version of ESCP, denoted as offline ESCP. In this variant, we modify the online interaction process with the environment to sample trajectories from the offline dataset.

\subsection{Configurations}
The details of the important configurations and hyper-parameters used to produce the experimental results in this paper are listed in \cref{table4}. Regarding the encoder model, we utilized a linear layer with 128 hidden units, a GRU network with 64 hidden units, and a linear layer with 64 hidden units for parameterization. The encoder's output is scaled using a $\mathrm{Tanh}$ function.

\begin{table*}[htbp]
  \centering
  \caption{Configurations and hyper-parameters used in offline encoder training.}
  \small{
    \begin{tabular}{ccccccc}
    \toprule
    \multicolumn{1}{c}{Configurations} & \multicolumn{1}{c}{Cheetah-gravity} & \multicolumn{1}{c}{Cheetah-dof} & \multicolumn{1}{c}{Cheetah-torso} & \multicolumn{1}{c}{Hopper-gravity} & \multicolumn{1}{c}{Walker-gravity} & \multicolumn{1}{c}{Pendulum-gravity} \bigstrut\\
    \midrule
     Debias loss weight& 0.2  & 1.0 & 1.0 & 0.2 & 1.0 & 1.0\\
     Distortion loss weight & 1.0 & 1.0 & 1.0 & 1.0 & 1.0 & 1.0\\
    History length  & 8 & 8 & 8 & 8 & 8 & 8 \\
    Latent space dim & 2 & 2 & 2 & 2 & 2 & 2\\
    Batch size & 256 & 256 & 256 & 256 & 256 & 256\\
    Learning rate & 3e-4 & 3e-4 & 3e-4 & 3e-4 & 3e-4 & 3e-4\\
    Training steps & 2000000 & 2000000 & 2000000 & 2000000 & 2000000 & 400000\\
    \bottomrule
    \end{tabular}%
  }
  \label{table4}%
\end{table*}%

\section{Representation Visualization of DORA} \label{Appendix: Representation Visualization of DORA}

\begin{figure}[ht] 
  \centering
  \subcaptionbox{Cheetah-dof}{
    \includegraphics[width=0.18\linewidth, trim = 0cm 0.6cm 0.5cm 1.0cm, clip]{./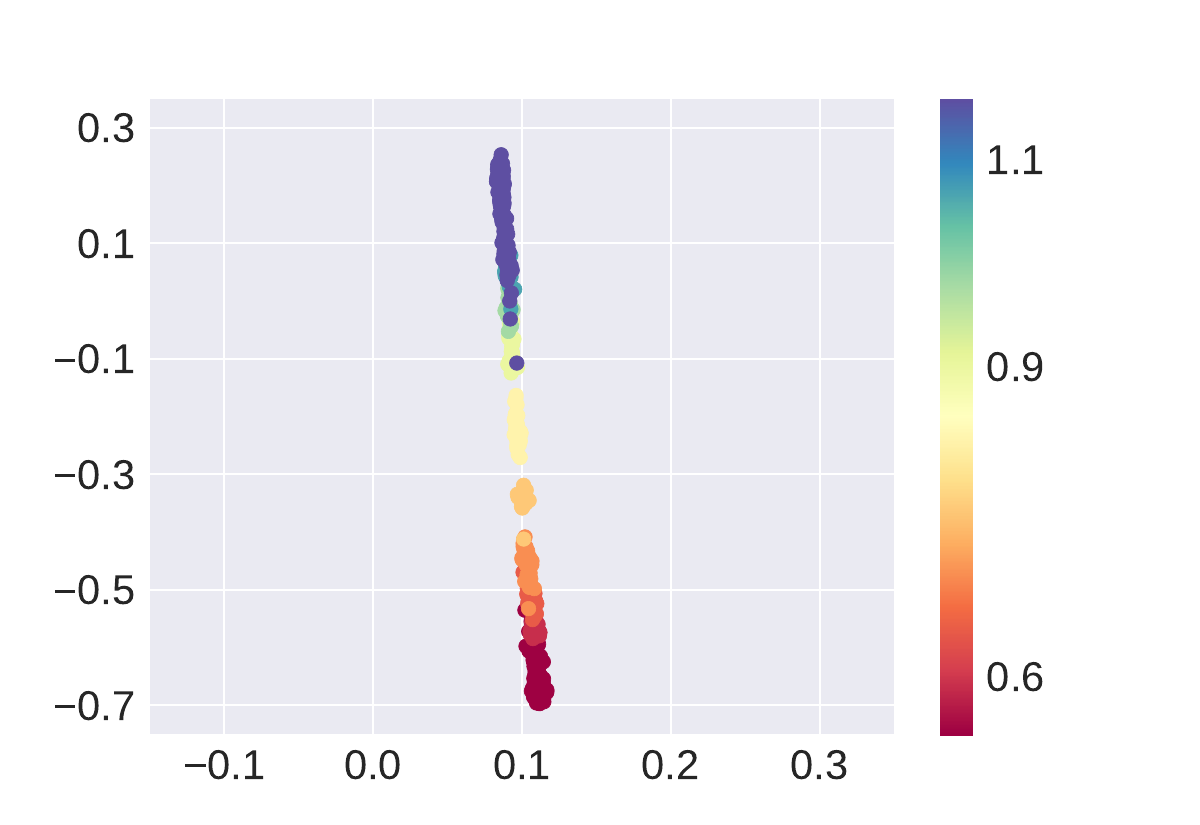}
  }
  \hspace{0.2mm} 
  \subcaptionbox{Cheetah-torso}{
    \includegraphics[width=0.18\linewidth, trim = 0cm 0.6cm 0.5cm 1.0cm, clip]{./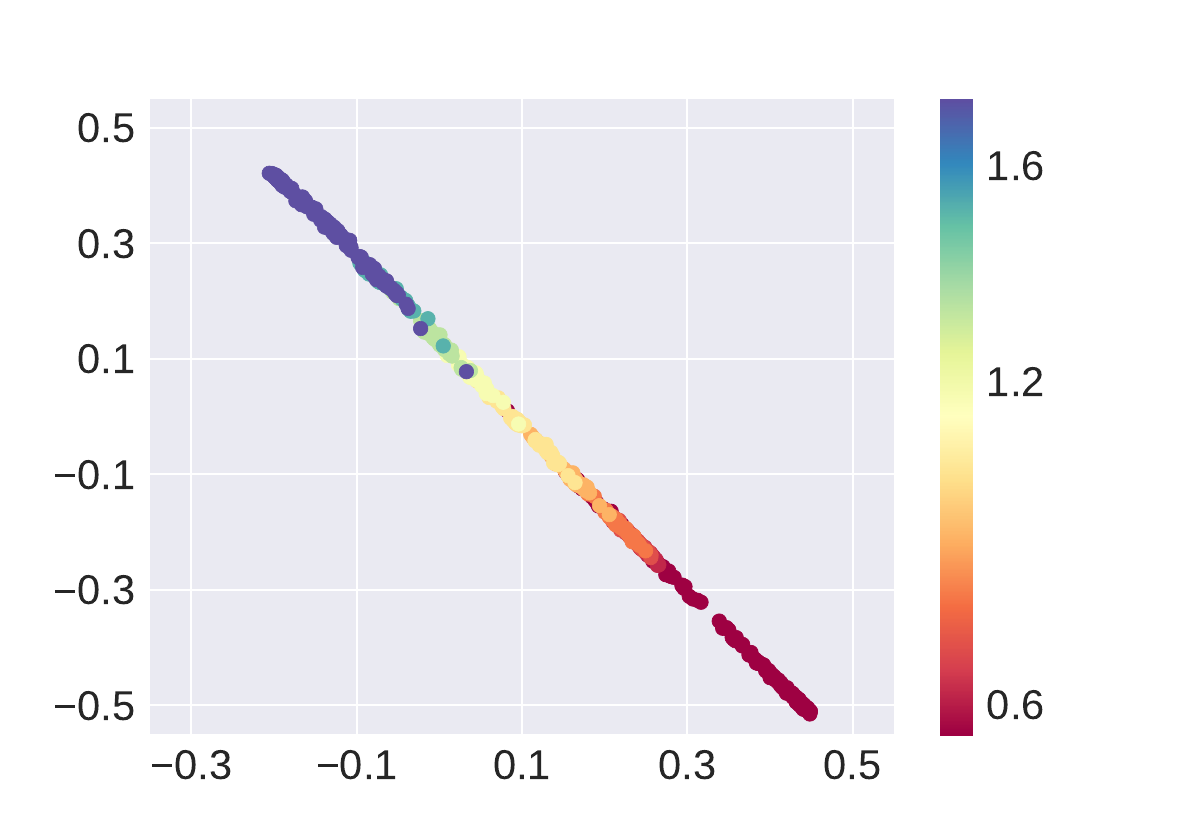}
  }
  \hspace{0.2mm} 
  \subcaptionbox{Pendulum-gravity}{
    \includegraphics[width=0.18\linewidth, trim = 0cm 0.6cm 0.5cm 1.0cm, clip]{./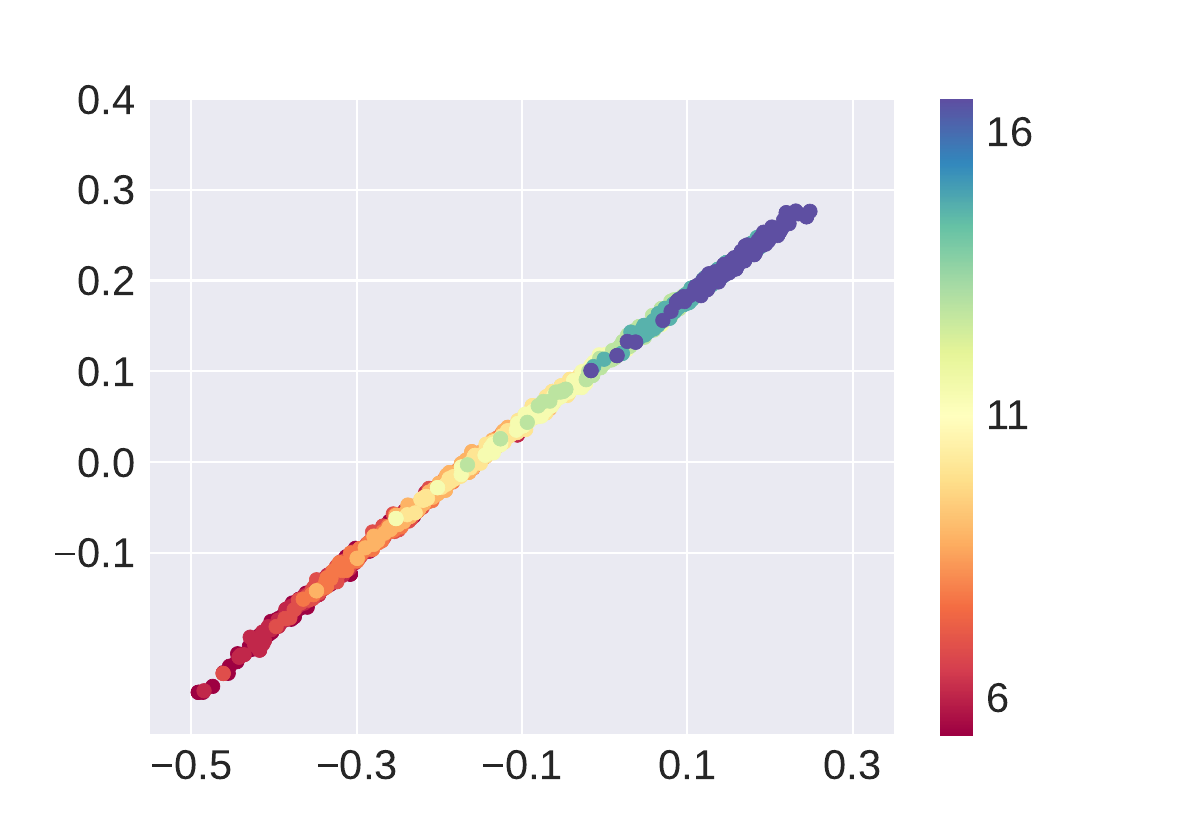}
  } 
  \hspace{0.2mm} 
  \subcaptionbox{Walker-gravity}{
    \includegraphics[width=0.18\linewidth, trim = 0cm 0.6cm 0.5cm 1.0cm, clip]{./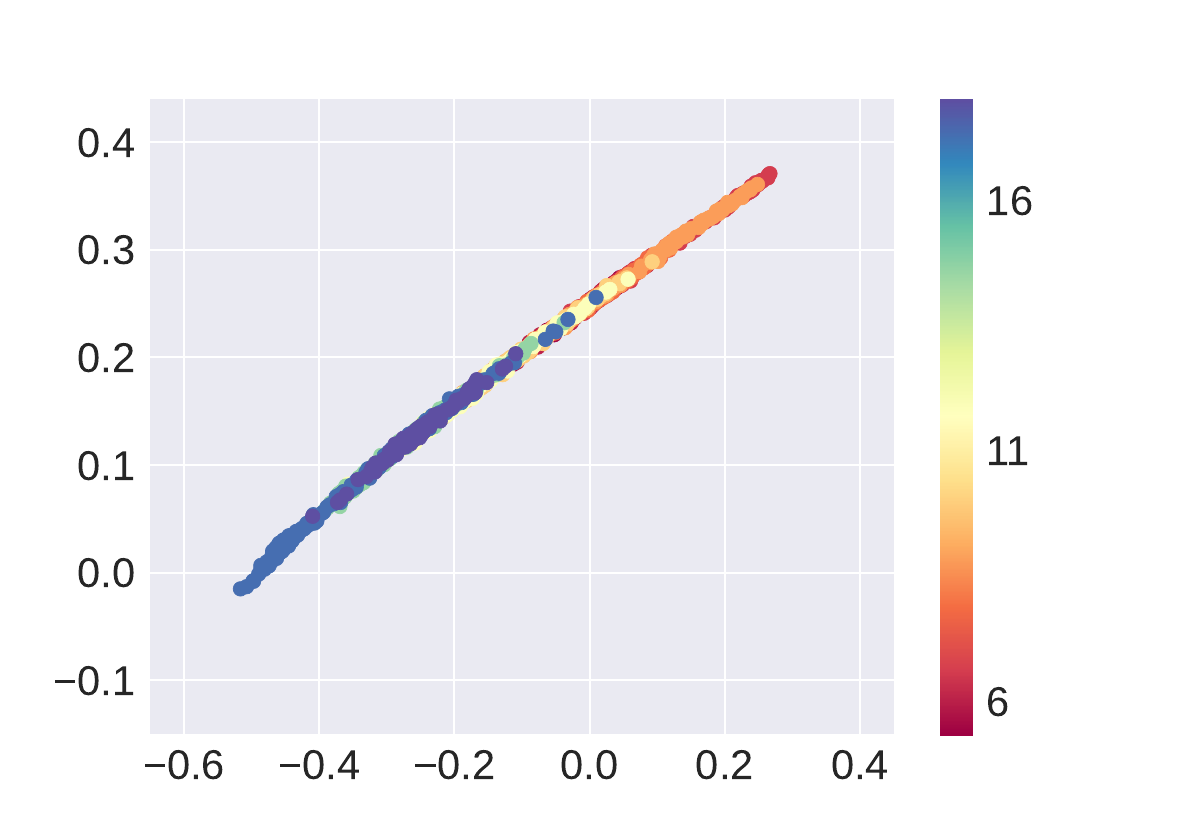}
  } 
  \hspace{0.2mm} 
  \subcaptionbox{Hopper-gravity}{
    \includegraphics[width=0.18\linewidth, trim = 0cm 0.6cm 0.5cm 1.0cm, clip]{./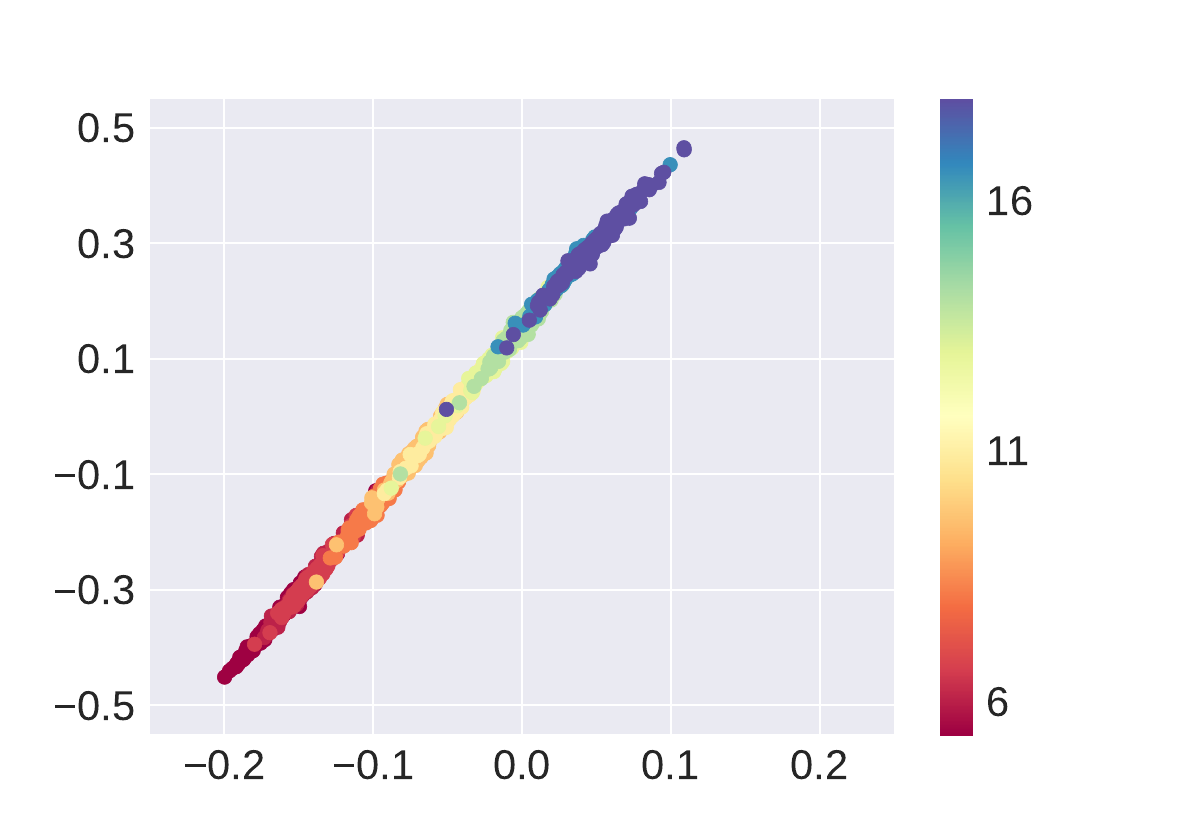}
  } 
\caption{Representation visualization of DORA in all 6 MuJoCo tasks.}
\label{Appendix: Representation Visualization}
\end{figure}
We present the representation visualization figures of all 6 MuJoCo tasks in \cref{Appendix: Representation Visualization}. With the color indicating the real parameters of dynamics, we find that the data points are sorted sequentially based on the real values of the varying parameters, which shows that the task representations capture the information of different task dynamics. 

Moreover, we visualize the encodings of DORA in \texttt{Cheetah-gravity} with OOD dynamics, shown in \cref{Fig: Appendix OOD Visulization}. It is evident to find that the representations in OOD dynamics (\cref{Fig: Appendix visual in OOD}) are situated at both ends of the encodings in IID dynamics (\cref{Fig: Appendix visual in IID}). Such visualization results indicate that DORA learns generalizable representations, which can be extended to unseen OOD dynamics.

\begin{figure}[ht] 
  \centering
  \begin{subfigure}[b]{0.22\textwidth}
    \includegraphics[width=\textwidth, trim = 1cm 0.5cm 2cm 1.0cm, clip]{./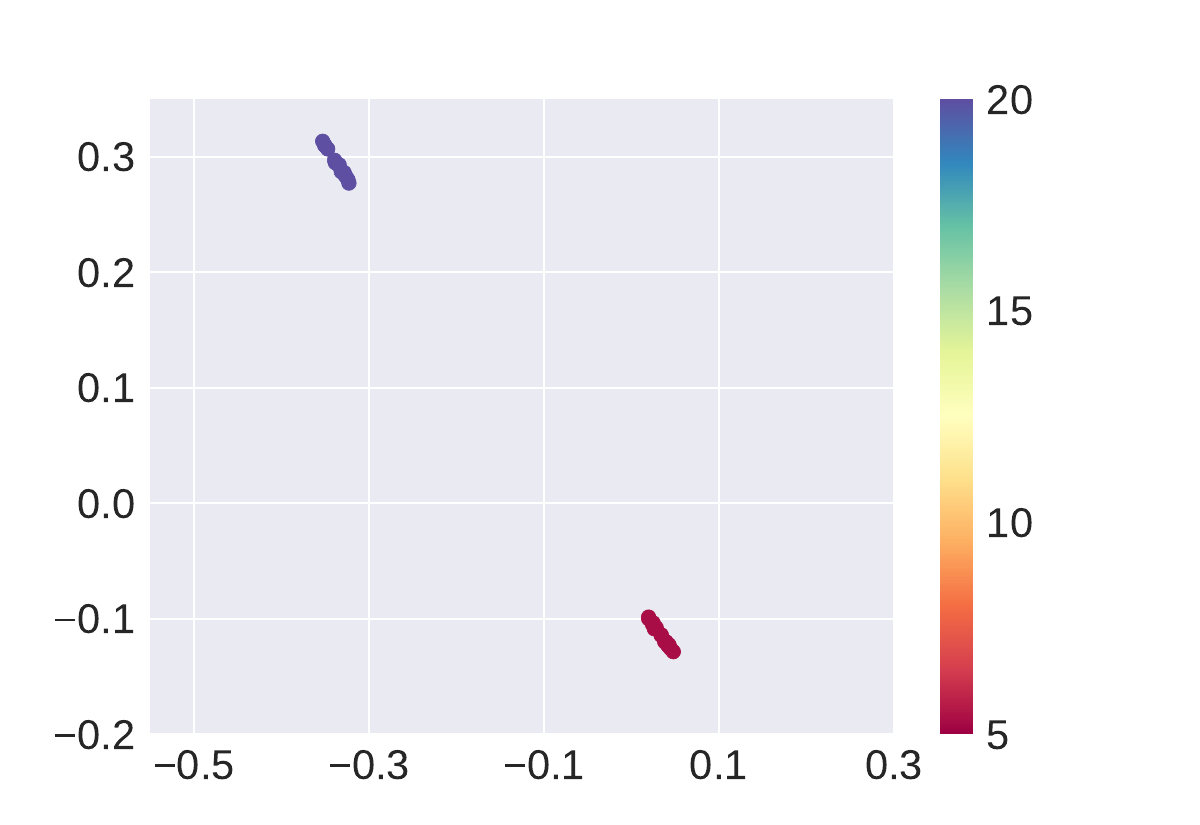}
    \caption{In OOD dynamics}
    \label{Fig: Appendix visual in OOD}
  \end{subfigure}
  \hspace{10mm}
  \begin{subfigure}[b]{0.22\textwidth}
    \includegraphics[width=\textwidth, trim =  1cm 0.5cm 2cm 1.0cm, clip]{./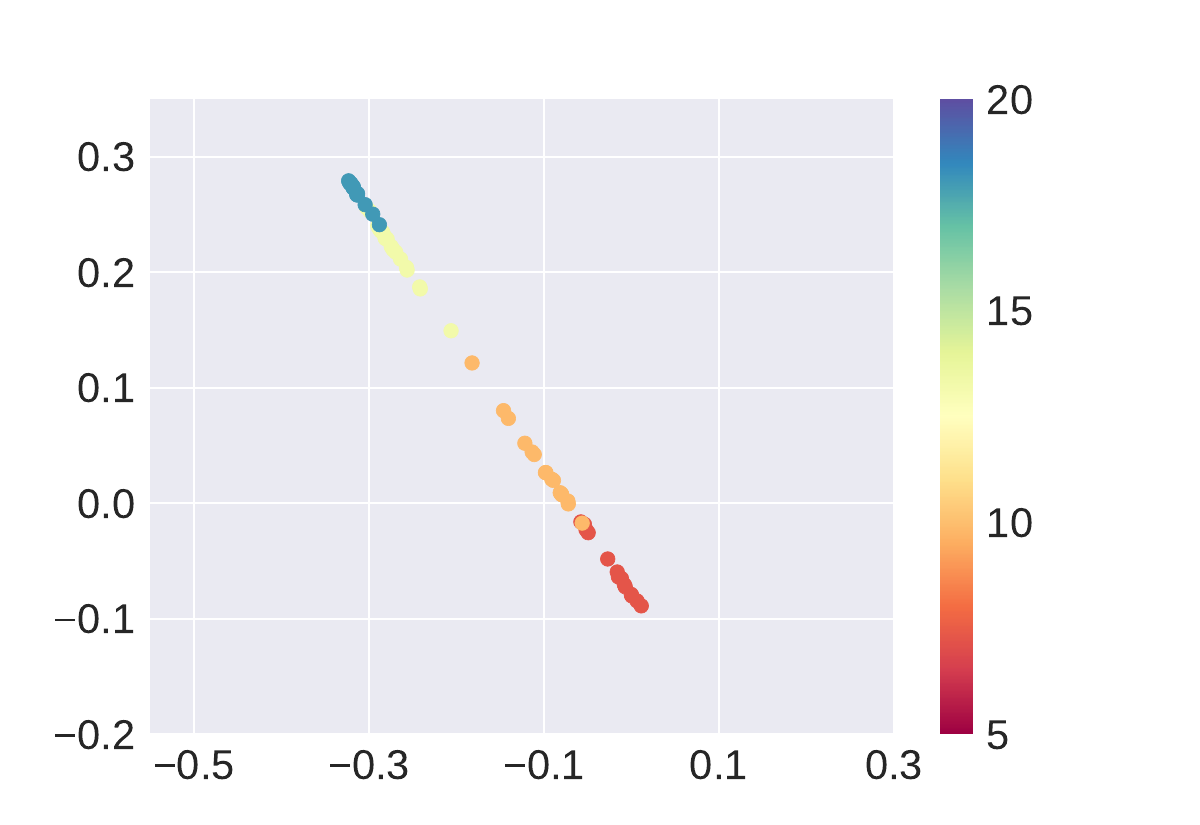}
    \caption{In IID dynamics}
    \label{Fig: Appendix visual in IID}
  \end{subfigure}
\caption{Representation visulization in \texttt{Cheetah-gravity}. \textbf{Left:} In OOD dynamics. \textbf{Right:} In IID dynamics. }
\label{Fig: Appendix OOD Visulization}
\end{figure}

\section{Experiments in the Setting of OMRL}

In the general OMRL setting, the algorithms are permitted to use a certain policy to collect context before evaluation. We compare the performance of DODA and other baselines in such a setting and the results are shown in figure \ref{training_curve}. Benefiting from the effective task representations, DORA still exhibits a remarkable performance and outperforms the baselines in 5 of the total 6 MuJoCo tasks. 

\begin{figure*}[htbp]
  \centering
    \includegraphics[width=0.78\linewidth, trim = 3cm 0.5cm 3cm 2.2cm, clip]{./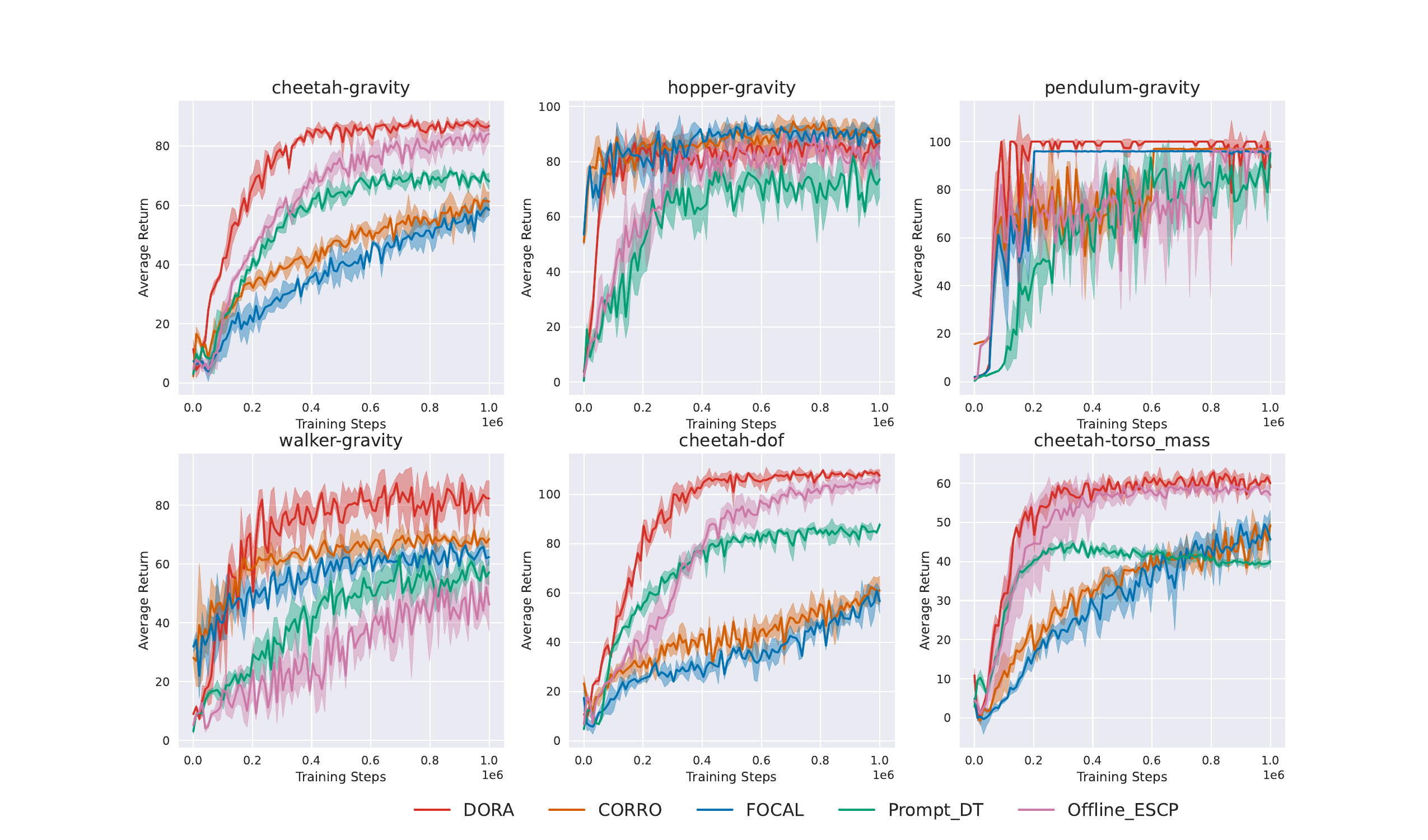}
  \caption{ Test returns of DORA against the baselines in IID dynamics in the general OMRL setting. During meta-testing, a pre-collected context can be utilized to infer task representations before evaluation.}
  \label{training_curve}
\end{figure*}

\section{Further Ablation Studies} \label{Appendix: Ablation}

\subsection{Ablation Studies on the Losses} 

To visually illustrate the impact of the distortion loss and debias loss in DORA, we take the visualization results on \texttt{Cheetah-gravity} as an example. \cref{Fig: Without Distortion Loss} shows that the encoder can not extract informative representations without distortion loss. In addition, without the debias loss, representations inferred from tasks of similar dynamics no longer exhibit correlations as shown in \cref{Fig: Without Debias Loss}.

\subsection{Ablation Studies on $\beta$} \label{Appendix: Ablation Studies on beta}
We conducted sensitivity tests on the hyper-parameter $\beta$ on 3 MuJoCo tasks, and the results indicate that excessively large or small values of $\beta$ lead to a decline in policy performance. Although in \texttt{Pendulum-gravity}, the case with $\beta=0.2$ shows better average performance, the policy performance sharply drops when $\beta=0$ (In \cref{Fig: ablation_loss}). These experimental results, on the one hand, demonstrate that when $\beta$ is too small, the encoder struggles to debias representations from the behavior policy. On the other hand, when $\beta$ is too large, the diminishing effect of the distortion loss during the optimization process leads to a decrease in policy performance.

\begin{figure}[htbp] 
  \centering
  \begin{subfigure}[b]{0.255\textwidth}
    \includegraphics[width=\textwidth, trim = 1cm 0.5cm 2cm 1.0cm, clip]{./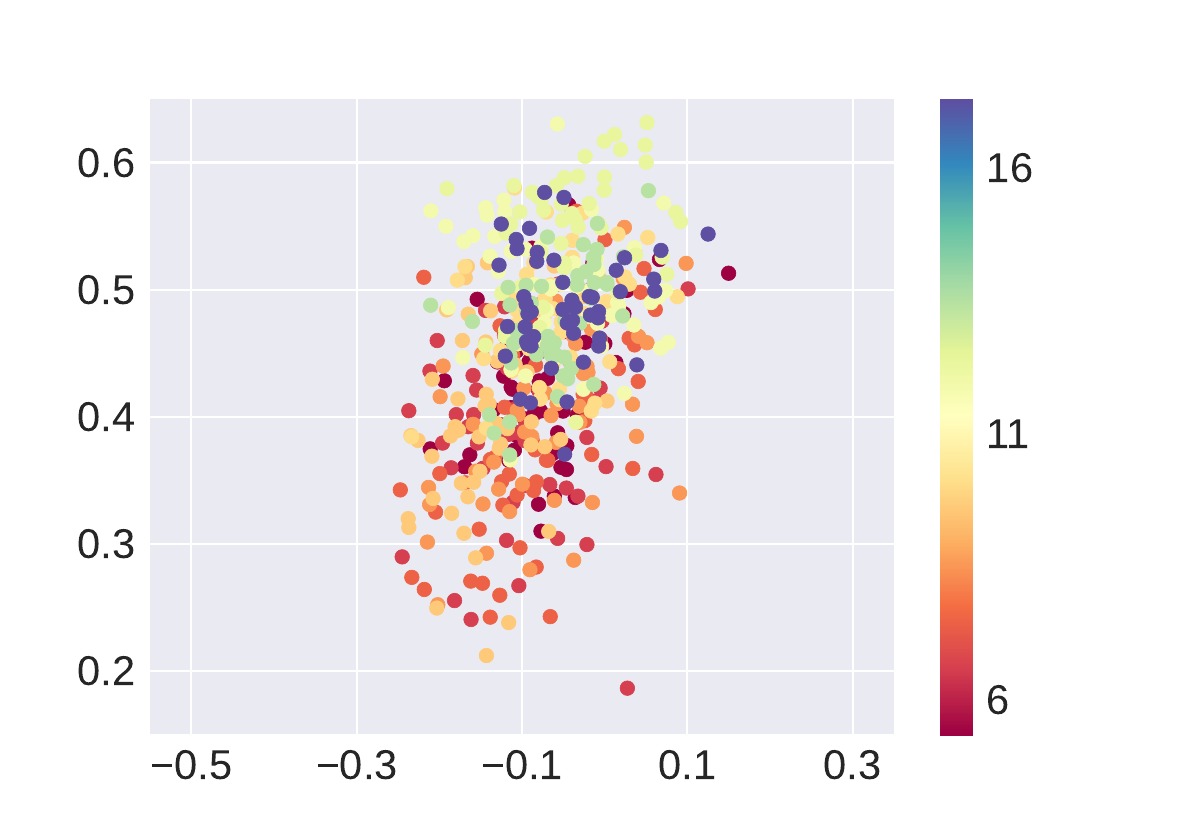}
    \caption{Without Distortion Loss}
    \label{Fig: Without Distortion Loss}
  \end{subfigure}
  \hspace{5mm}
  \begin{subfigure}[b]{0.255\textwidth}
    \includegraphics[width=\textwidth, trim =  1cm 0.5cm 2cm 1.0cm, clip]{./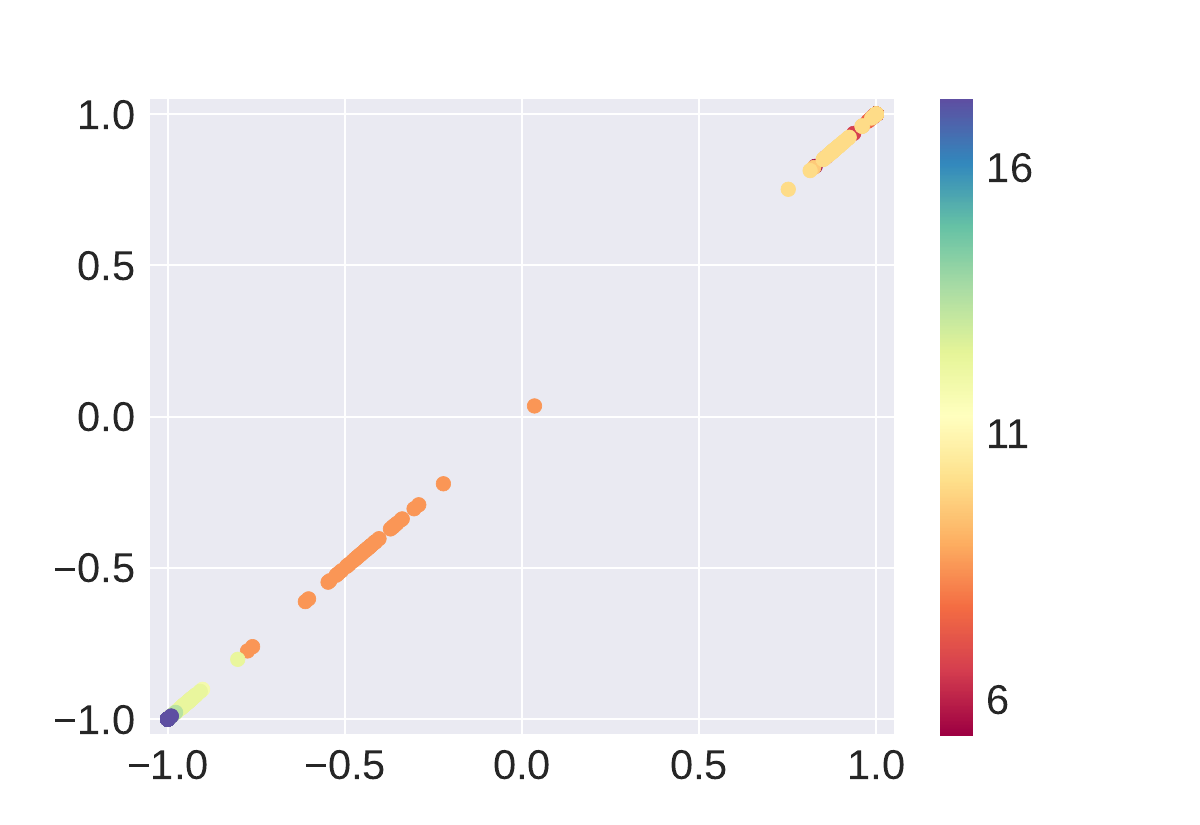}
    \caption{Without Debias Loss}
    \label{Fig: Without Debias Loss}
  \end{subfigure}
  \hspace{5mm}
  \begin{subfigure}[b]{0.3\textwidth}
    \includegraphics[width=\textwidth, trim =  1cm 0.5cm 2cm 1.0cm, clip]{./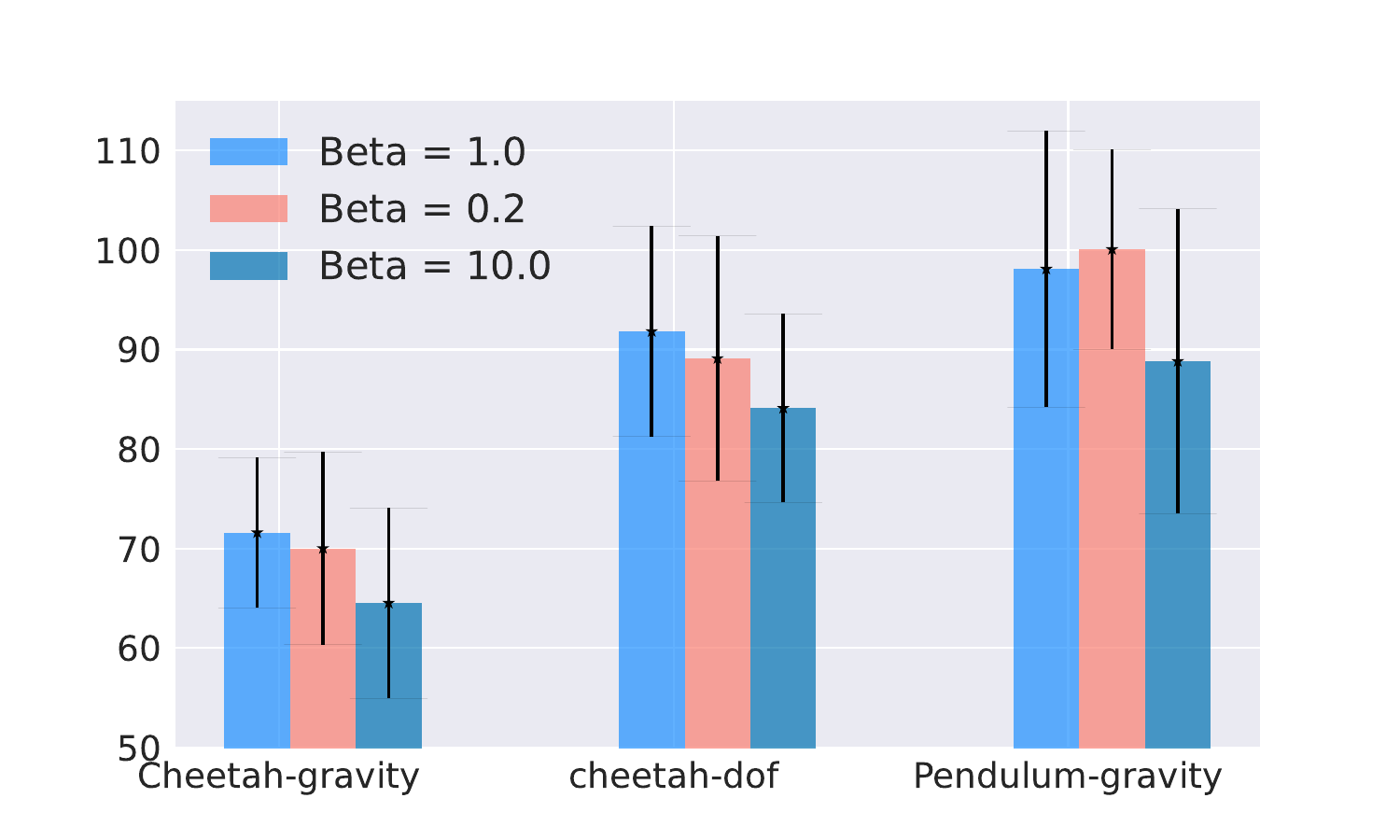}
    \caption{Ablation Studies on $\beta$}
    \label{Fig: ablation on beta}
  \end{subfigure}
\caption{Further ablation studies. \textbf{Left:}  DORA without the distortion loss in \texttt{Cheetah-gravity}.  \textbf{Middle:} DORA without debias loss in \texttt{Cheetah-gravity}. \textbf{Right:} Average normalized return of DORA with $\beta$ = 0.2, 1.0, and 10.0 over 5 random seeds. The error bar stands for the standard deviation.  }
\label{Fig: Appendix ablation visualization}
\end{figure}

\subsection{Performance with Less Offline Data Available} \label{Appendix: Performance with Less Offline Data Available}

We conducted experiments to test the performance of DORA under the conditions of halving the size of each single-task dataset (i.e. 10e5 transitions per single-task dataset) and find DORA still outperforms the other baselines in \cref{table5}.

\begin{table*}[htbp]
  \centering
  \caption{Performance with Less Offline Data Available on \texttt{Cheetah-gravity} in Non-stationary Dynamics.}
  \small{
    \begin{tabular}{cccccc}
    \toprule
    \multicolumn{1}{c}{} & \multicolumn{1}{c}{FOCAL} & \multicolumn{1}{c}{CORRO} & \multicolumn{1}{c}{Prompt-DT } & \multicolumn{1}{c}{Offline ESCP} & \multicolumn{1}{c}{DORA}  \bigstrut\\
    \midrule
    Half Datastes & 47.43 $\pm$ 4.24 & 51.93 $\pm$ 4.84 & 32.85 $\pm$ 7.66 & 51.75 $\pm$ 4.36 & \textbf{66.28 $\pm$ 7.85} \\
    Full Datasets & 49.96 $\pm$ 6.41 & 54.08 $\pm$ 8.54 & 23.52 $\pm$ 7.75 & 65.42 $\pm$ 6.54 & \textbf{71.61 $\pm$ 7.56} \\
    \bottomrule
    \end{tabular}%
  }
  \label{table5}%
\end{table*}%

Besides, we conducted experiments of varying the number of training tasks to 5, 8, and 10. The experimental results are shown in \cref{table6}. We observe that as the number of tasks decreases, the performance of all algorithms declines. This is because fewer training tasks can affect the algorithms' generalization performance in non-stationary dynamics. 

\begin{table*}[htbp]
  \centering
  \caption{Performance with Less Traning Tasks on \texttt{Cheetah-gravity} in Non-stationary Dynamics.}
  \small{
    \begin{tabular}{cccc}
    \toprule
    \multicolumn{1}{c}{Number of Traning Tasks } & \multicolumn{1}{c}{5} & \multicolumn{1}{c}{8} & \multicolumn{1}{c}{10}  \bigstrut\\
    \midrule
    DORA's Performance & 54.92 $\pm$ 17.71 & 65.48 $\pm$ 10.58 & 71.61 $\pm$ 7.56  \\
    \bottomrule
    \end{tabular}%
  }
  \label{table6}%
\end{table*}%

\subsection{Ablation Studies on Offline RL Algorithms} \label{Appendix: Ablation Studies on Offline RL Algorithms}

Besides CQL, we train the contextual policy of DORA with different offline RL algorithms, including a model-based method COMBO\cite{COMBO} and a model-free method EDAC\cite{EDAC}. The results are shown in \cref{table7}.

Although DORA(CQL) is not the best performing among several versions, we use CQL as the general offline RL algorithm in our framework for 2 reasons. Firstly, CQL is well-known and widely applied in offline RL. Secondly, this algorithm is relatively simple and easy to implement. Thus, we can easily compare the performance of various baselines upon it.

\begin{table*}[htbp]
  \centering
  \caption{Performance with Different Offline RL Algorithms in Non-stationary Dynamics.}
  \small{
    \begin{tabular}{cccccc}
    \toprule
    \multicolumn{1}{c}{} & \multicolumn{1}{c}{DORA(CQL)} & \multicolumn{1}{c}{DORA(EDAC)} & \multicolumn{1}{c}{DORA(COMBO)}   \bigstrut\\
    \midrule
    cheetah-gravity & 71.61 $\pm$ \,\,\,7.56 & \textbf{73.58 $\pm$ \,\,\,5.44} & \,\,\,69.16 $\pm$ 7.36 \\
    cheetah-dof & 91.83 $\pm$ 10.57 & 89.89 $\pm$ 
 \,\,\,5.85 & \textbf{\,\,\,93.04 $\pm$ 7.47} \\
    pendulum-gravity & 97.11 $\pm$ 16.91 & 93.12 $\pm$ 15.35 & \textbf{100.08 $\pm$ 0.05} \\
    \bottomrule
    \end{tabular}%
  }
  \label{table7}%
\end{table*}%

\subsection{Performance with Different Changing Rate of Non-stationary Dynamics} \label{Appendix: Performance with Different Changing Rate of Non-stationary Dynamics}

In order to test DORA's performance with different changing rates of non-stationary dynamics, we change the physical parameters of the environments every 10, 30, and 50 timesteps. The results in \cref{table8} indicate that DORA can adapt to frequent changes in dynamics.

\begin{table*}[htbp]
  \centering
  \caption{Performance with Different Changing Rate of Non-stationary Dynamics.}
  \small{
    \begin{tabular}{cccccc}
    \toprule
    \multicolumn{1}{c}{
    Tasks \textbackslash{} Changing per Steps } & \multicolumn{1}{c}{10} & \multicolumn{1}{c}{30} & \multicolumn{1}{c}{50}   \bigstrut\\
    \midrule
    cheetah-gravity & 64.28 $\pm$ \,\,\,4.01 & 66.62 $\pm$ \,\,\,5.78 & 71.61 $\pm$ \,\,\,7.56  \\
    cheetah-dof & 85.09 $\pm$ \,\,\,5.63 & 87.44 $\pm$ \,\,\,6.48 & 91.83 $\pm$ 10.57 \\
    pendulum-gravity & 96.89 $\pm$ 14.56 & 96.49 $\pm$ 12.44 & 97.11 $\pm$ 16.91 \\
    \bottomrule
    \end{tabular}%
  }
  \label{table8}%
\end{table*}%

%%%%%%%%%%%%%%%%%%%%%%%%%%%%%%%%%%%%%%%%%%%%%%%%%%%%%%%%%%%%%%%%%%%%%%%%%%%%%%%
%%%%%%%%%%%%%%%%%%%%%%%%%%%%%%%%%%%%%%%%%%%%%%%%%%%%%%%%%%%%%%%%%%%%%%%%%%%%%%%

\end{document}